\PassOptionsToPackage{preprint}{neurips_2026}

\documentclass{article}
\usepackage{neurips_2026}

\usepackage[utf8]{inputenc}
\usepackage[T1]{fontenc}
\usepackage{courier}
\usepackage{microtype}
\usepackage{graphicx}
\usepackage{subcaption}
\usepackage{booktabs}
\usepackage{placeins}
\usepackage{amsmath}
\usepackage{amssymb}
\usepackage{amsfonts}
\usepackage{mathtools}
\usepackage{amsthm}
\usepackage{multirow}
\usepackage{colortbl}
\usepackage{algorithm}
\usepackage{algorithmic}
\usepackage{xcolor}
\usepackage{url}
\usepackage{hyperref}
\usepackage[capitalize,noabbrev]{cleveref}
\usepackage{wrapfig}
\usepackage{float}
\usepackage{enumitem}

\newcommand{\method}{\textsc{Pr}\textsuperscript{2}}
\newcommand{\methodthree}{\textsc{Pr}\textsuperscript{3}}
\newcommand{\rtwo}{\textsc{R}\textsuperscript{2}}
\newcommand{\rthree}{\textsc{R}\textsuperscript{3}}
\newcommand{\methodmath}{\mathrm{Pr}^{2}}

\theoremstyle{plain}
\newtheorem{theorem}{Theorem}[section]
\newtheorem{proposition}[theorem]{Proposition}
\newtheorem{lemma}[theorem]{Lemma}

\theoremstyle{definition}

\theoremstyle{remark}

\definecolor{TableHeader}{gray}{0.94}
\definecolor{TableHighlight}{RGB}{232,240,248}

\hypersetup{
    colorlinks,
    linkcolor={red!50!black},
    citecolor={blue!50!black},
    urlcolor={blue!80!black}
}
\title{\method: Predictive Routing Replay \\for MoE-Based LLM Reinforcement Learning}

\author{%
Daize Dong$^\clubsuit$,\quad
Junlin Chen$^\clubsuit$,\quad
Haolong Jia$^\clubsuit$,\quad
Jiang Liu$^\blacklozenge$,\\\bfseries
Jiawei Wu$^\clubsuit$,\quad
Huanwei Di$^\clubsuit$,\quad
Jialian Wu$^\blacklozenge$,\quad
Zhengzhong Liu$^\bigstar$,\\\bfseries
Zicheng Liu$^\blacklozenge$,\quad
Emad Barsoum$^\blacklozenge$,\quad
Dimitris N. Metaxas$^\clubsuit$,\quad
Hongyi Wang$^\clubsuit$\\
\normalsize $^\clubsuit$ Rutgers University \ \ $^\blacklozenge$ AMD \ \ $^\bigstar$ MBZUAI\\
\small \texttt{daize.dong@rutgers.edu; hw689@cs.rutgers.edu}
}

\begin{document}

\maketitle

\begin{abstract}
Mixture of Experts (MoE) Large Language Models (LLMs) achieve strong performance at scale. However, reinforcement learning (RL) on MoE-based LLMs often suffers from training instability. A root cause is \textit{router drift}, i.e., expert activations can change drastically across model updates and differ between disaggregated rollout and training phases, causing large rollout--training mismatch and unstable importance sampling weights in PPO-style RL algorithms.
Routing replay mitigates this issue by freezing the replay route within each reasoning trajectory, but it ignores how the router evolves under off-policy updates and thus causes \textit{router staleness}. To address this limitation, we propose \textbf{Predictive Routing Replay (\method)}, which augments each router with a lightweight evolution predictor that learns to anticipate short-horizon router evolution.
During the rollout phase, we use the predictive routing distribution to apply top-$k$ routing, enabling gradients to reach experts that are likely to become active after updates. During the training phase, we replay the resulting predicted route to retain consistency for stable importance estimation. Theoretical analysis and experiments support that \method~reduces routing-induced mismatch, improves RL stability, and yields stronger performance across various reasoning benchmarks.
\end{abstract}

\section{Introduction}
\label{sec:intro}
Large language models (LLMs) have demonstrated strong reasoning capabilities when scaled through increased model parameters and training compute~\citep{openai2024gpt4technicalreport,deepseekai2026deepseekv4,comanici2025gemini25pushingfrontier}. 
Mixture of Experts (MoE) models have recently emerged as a promising LLM architecture, enabling training compute to scale sublinearly with model size~\citep{jiang2024mixtral,zhu2024llama,liu2024deepseek,yang2025qwen3,agarwal2025gpt,team2025kimi,5team2026glm5vibecodingagentic,deepseekai2026deepseekv4}. 
This efficiency is achieved by activating a subset of expert networks within the large model, where the activated parameters typically account for less than $10\%$ of the total model parameters~\citep{liu2024deepseek,yang2025qwen3,5team2026glm5vibecodingagentic}. This conditional computation paradigm has enabled models with trillions of parameters with manageable inference costs, and has become a core design choice in large-scale LLMs.

Beyond pretraining, reinforcement learning (RL) has emerged as a central mechanism for optimizing pretrained LLMs to improve reasoning and agentic capabilities, as well as alignment with human preferences~\citep{ouyang2022training,rafailov2023direct,shao2024deepseekmath,liu2025k2,gao2025beyond}. Algorithms such as Proximal Policy Optimization (PPO)~\citep{schulman2017proximal} and its variants~\citep{shao2024deepseekmath,guo2025deepseek,yu2025dapo} are now widely used to improve reasoning, long-horizon decision making, and tool use tasks~\mbox{\citep{chen2022program,yao2023react}}.
However, applying RL to MoE-based LLMs introduces a unique and underexplored set of stability challenges that do not arise in dense models~\citep{yao2025offpolicy}.

A key difficulty for conducting stable RL on MoE-based LLMs stems from the presence of learned routers that dynamically assign tokens to experts~\citep{yao2025offpolicy,zheng2025stabilizing,kim2025defending}. When policy updates reuse trajectories generated by a stale snapshot, MoE models expose this off-policy gap as \textit{router drift}, where the same token may be routed to different experts by the old snapshot and the current training policy. Such routing changes alter the computation path behind PPO-style importance ratios and can amplify their variance, thereby destabilizing policy optimization~\citep{schulman2017proximal,shao2024deepseekmath,yu2025dapo}.

To mitigate routing mismatch, routing replay records expert routes before training updates and reuses them during gradient evaluation; the old-snapshot variant records routes from $\pi_{\theta_{\text{old}}}$~\citep{zheng2024sglang}.
While effective at restoring routing consistency, routing replay introduces its own limitations.
By fixing the cached routes, it prevents gradients from reaching experts that would become active after subsequent policy updates.
Over time, this induces \textit{router staleness}, as newly activated experts under the updated routing policy are excluded from gradient updates. Theoretically, this yields stale gradient estimation under replayed routes, where gradients are evaluated under a fixed replay route whose distribution deviates from the evolving current training policy.

In this work, we revisit routing replay from a principled perspective. We formalize routing replay as replacing the current routing distribution with a degenerate replay measure and derive a bound on router staleness, showing that replay staleness controls route-induced gradient deviation in fixed-route PPO gradients. This view reveals:
\begin{quote}
\emph{Stable importance estimation favors frozen routing, while effective learning requires routing distributions that track policy evolution.}
\end{quote}

\begin{figure}[t]
    \centering
    \vspace{-14pt}
    \includegraphics[width=1\linewidth]{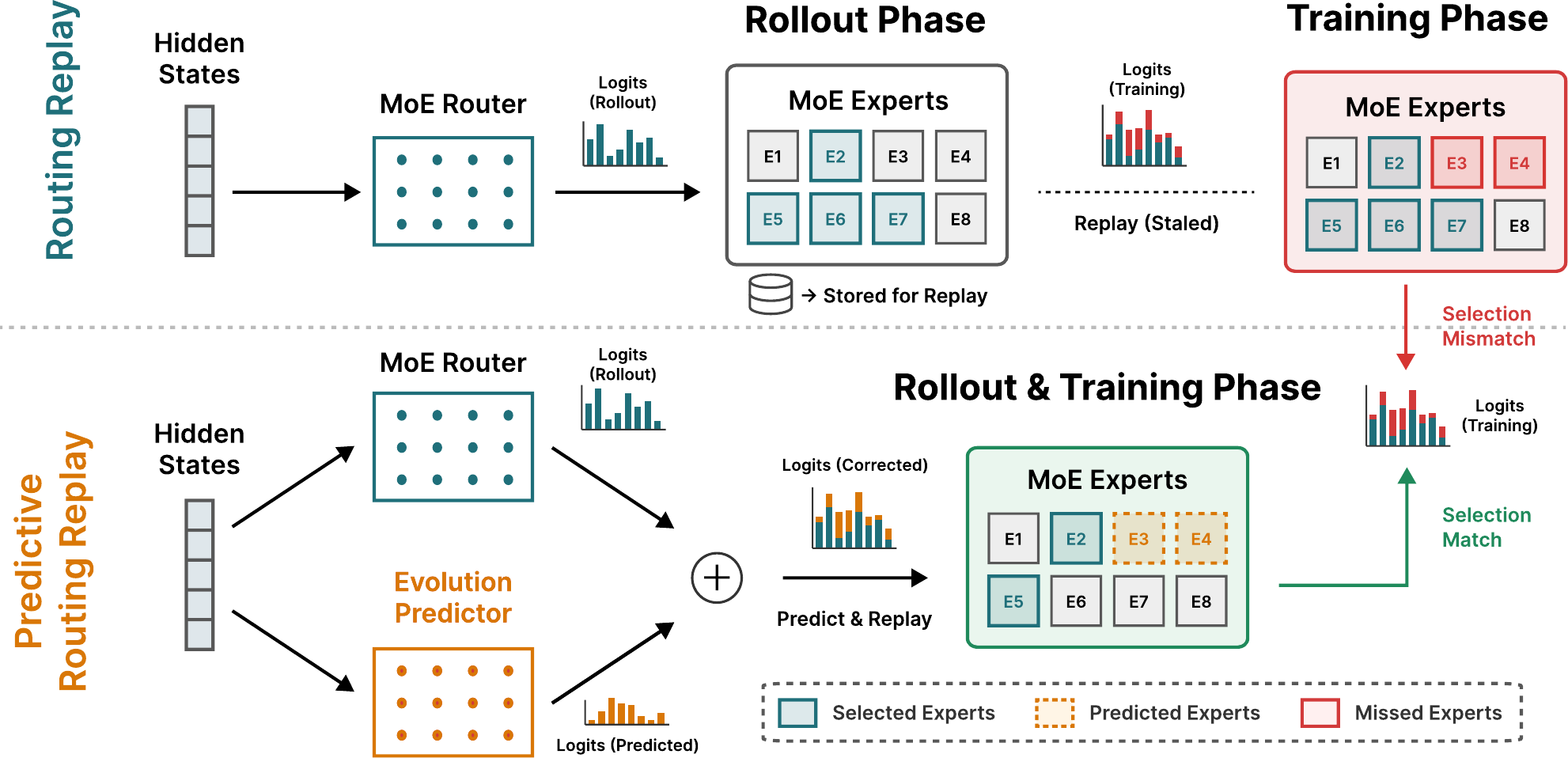}
    \vspace{-3pt}
    \caption{\textbf{Overview of Predictive Routing Replay (\method).} Routing replay stabilizes MoE RL by fixing routes, but cached routes become stale after a few off-policy steps. \method~adds an evolution predictor before top-$k$ selection, caches the predicted expert indices during rollout, and replays them during training to preserve route consistency while tracking short-horizon router evolution.}
    \label{fig:head}
    \vspace{-12pt}
\end{figure}
Motivated by this perspective, we propose \textbf{Predictive Routing Replay (\method)}, which augments routing replay with a route prediction scheme (Figure~\ref{fig:head}). \method~augments the router in each MoE layer with a lightweight \textit{evolution predictor} that anticipates \textit{router drift}. During the rollout phase, the predictor outputs a learned logit bias to adjust the routing distribution. We then perform top-$k$ routing under the biased logits to obtain a predicted expert index, which is cached alongside trajectories for subsequent updates. During the training phase, we replay the cached top-$k$ expert indices while disabling the predictor, keeping the replay route fixed to stabilize PPO-style importance ratios yet allowing gradient flow to experts that are likely to become active after policy updates. We train the evolution predictor with a KL divergence objective motivated by this bound on router staleness, along with an efficient training scheme and a dedicated learning-rate multiplier for fast adaptation.

\method~can be integrated into existing MoE RL frameworks, e.g., VeRL~\citep{sheng2025hybridflow}, Slime~\citep{slime_github} and TRL~\citep{vonwerra2020trl}, without modifying the policy optimization objective. Empirically, \method~substantially reduces routing mismatch, stabilizes RL training, and improves performance over strong baselines~\citep{guo2025deepseek,yu2025dapo,zheng2025group} across reasoning tasks with negligible computation overhead. Notably, on AIME24~\citep{aime}, GRPO with \method\ achieves $40.31\%$ accuracy on Qwen3-30B-A3B-Base~\citep{yang2025qwen3}, improving over routing replay and GSPO by $12.29\%$ and $9.38\%$ points, respectively.

Our main contributions are summarized below.
\begin{itemize}[leftmargin=1.5em]
\vspace{-4pt}
    \item We formalize \textit{router staleness} as a key source of degradation in routing replay for MoE-based LLM reinforcement learning, and derive a divergence-based bound showing that replay staleness controls route-induced gradient deviation in fixed-route PPO gradients.
    \vspace{-2pt}
    \item We propose \textbf{Predictive Routing Replay (\method)}, which predicts short-horizon router evolution using evolution predictors trained with a KL objective motivated by the above bound, enabling rollout-time routing to anticipate future expert activation.
    \item We demonstrate \method~significantly reduces routing mismatch, improves RL stability, and yields stronger performance across several reasoning tasks, establishing it as an effective alternative.
\end{itemize}

\section{Related Work}
\label{sec:related}

\paragraph{Mixture of Experts.} The Mixture of Experts (MoE) layer scales model capacity through conditional computation, where a learned router selects a subset of expert networks per token to reduce FLOPs while retaining a large number of parameters~\citep{shazeer2017outrageously,lepikhin2020gshard,fedus2022switch,zhu2024llama,jiang2024mixtral,yang2025qwen3,team2025kimi}. 
A large body of work studies the routing strategy~\citep{lewis2021base,huang2024harder}, load balancing~\citep{shazeer2017outrageously,fedus2022switch,liu2024deepseek}, and gradient estimation~\citep{kool2021unbiased,liu2023sparsebackpropagationmoetraining,liu2024gringradientinformedmoe} in MoE training.
Analysis also highlights that routing decisions can be fragile~\citep{dai2022stablemoe,zoph2022st}, which leads to unstable gradient flow and optimization during training~\citep{kim2025defending}. These findings motivate treating router evolution as an explicit modeling target when systems separate rollout from training~\citep{yao2025offpolicy}.

\paragraph{Reinforcement Learning in LLMs.}
Reinforcement learning (RL) is widely used to align LLMs for improved reasoning and instruction following, with PPO-style algorithms and their variants serving as widely used optimization methods~\citep{ouyang2022training,shao2024deepseekmath,guo2025deepseek,yu2025dapo,zhao2026geometricmean}.
Large-scale LLM RL is often implemented in disaggregated pipelines, where rollouts and training are lagged, thereby introducing off-policy effects ~\citep{mnih2016asynchronous,espeholt2018impala}. 
For MoE-based LLMs, this mismatch is exacerbated by \textit{router drift}, where the same token can be assigned to different experts during rollout and training forward passes. 
This routing mismatch can amplify importance-ratio variance and destabilize policy optimization, harming RL performance~\citep{yao2025offpolicy}.

\paragraph{Stabilizing RL in MoE.}
To overcome \textit{router drift}, routing replay caches expert indices and reuses them during training to ensure routing consistency; the old-snapshot variant~\citep{zheng2024sglang} records routes from the old policy snapshot. Another variant, rollout routing replay~\citep{ma2025stabilizing}, targets the rollout-engine setting.
Beyond routing replay, GSPO~\citep{zheng2025group} defines the importance ratio based on sequence likelihood and performs sequence-level clipping, avoiding token-level importance discrepancies induced by \textit{router drift}.
Other methods improve stability by modifying the clipping behavior~\citep{gao2025soft} or directly adjusting importance ratios to reduce sensitivity to off-policy errors~\citep{zhang2025towards}.
In contrast, our work uses a staleness-control view of routing replay to predict short-horizon router evolution.

\section{Preliminaries}
\label{sec:preliminary}

We study off-policy reinforcement learning of an autoregressive MoE-based LLM. Following the standard splitting of rollout and training stages in PPO-style optimization, we let an old policy snapshot generate trajectories that are then reused to update the current training policy.

\subsection{Off-Policy Reinforcement Learning}
Let $\theta$ denote the current training parameters and $\theta_{\text{old}}$ the stale parameters associated with the rollout batch. Given a prompt $x$, the old policy snapshot $\pi_{\theta_{\text{old}}}$ samples a completion $y=(y_1,\dots,y_T)$ one token at a time. At step $t$, the next token $y_t$ is generated conditioned on the prefix $(x,y_{<t})$, and a reward $R(x,y)$ is assigned after generation. We use $\pi_\theta$ to denote the current training policy. This gives the off-policy objective
\begin{align*}
    J(\theta)
    =
    \mathbb{E}_{x,\,y\sim \pi_{\theta_{\text{old}}}}
    \left[
    w_\theta(x,y)\,R(x,y)
    \right].
    \label{eq:offpolicy_obj}
\end{align*}
PPO-style methods~\citep{ouyang2022training,shao2024deepseekmath,guo2025deepseek} typically factorize importance weights $w_\theta$ into token-level importance ratios $r_t(\theta)$.
\begin{equation}
    w_\theta(x,y)=\prod_{t=1}^{T} r_t(\theta),
    \quad
    r_t(\theta)=\frac{\pi_\theta(y_t\mid x,y_{<t})}{\pi_{\theta_{\text{old}}}(y_t\mid x,y_{<t})}.
    \label{eq:token_ratio}
\end{equation}
We focus on PPO-style policy optimization, where training is driven by importance ratios in Eq.~\eqref{eq:token_ratio}. We summarize the detailed objectives in Appendix~\ref{app:ppo}.

\subsection{Reinforcement Learning on Mixture of Experts}
We consider an MoE-based LLM with $L$ MoE layers and $N$ experts per layer. At token $t$ and layer $l$, the router produces logits and routing probabilities over experts, and selects a top-$k$ expert index $\mathcal{I}_t^{(l)}\subseteq\{1,\dots,N\}$. We  denote the layer-wise route at step $t$ as
\begin{align*}
    \mathcal{R}_t := \{\mathcal{I}_t^{(1)},\ldots,\mathcal{I}_t^{(L)}\}.
    \label{eq:routing_path}
\end{align*}
Let $\rho^{\pi}_{\theta,t}(\mathcal{R}_t\mid x,y_{<t})$ denote the current route distribution induced by the routers of $\pi_\theta$ at token $t$ (possibly degenerate under deterministic top-$k$ selection). Conditioned on a route, the token distribution is $\pi_\theta(y_t\mid x,y_{<t},\mathcal{R}_t)$, and the marginal token distribution satisfies
\begin{equation}
    \pi_\theta(y_t\mid x,y_{<t})
    =
    \mathbb{E}_{\mathcal{R}_t\sim \rho^{\pi}_{\theta,t}(\cdot\mid x,y_{<t})}
    \big[\pi_\theta(y_t\mid x,y_{<t},\mathcal{R}_t)\big].
    \label{eq:moe_marginal}
\end{equation}
In an autoregressive MoE model, computing $\pi_\theta(y_t\mid x,y_{<t},\mathcal{R}_t)$ also depends on the key--value caches of all previous positions, which were produced under the routes selected at those steps. Eq.~\eqref{eq:moe_marginal} therefore conditions implicitly on the entire sequence of routes $\mathcal{R}_{\le t}$ used along the prefix, and we suppress this conditioning for notational brevity.

\paragraph{Router Drift.}
Router drift acts as a form of the stale policy gap specific to MoE. The same prefix $(x,y_{<t})$ may induce different routes under the old snapshot and current training policy. Let $\mathcal{R}^{\pi}_{\theta_{\text{old}},t}$ denote the route induced by $\pi_{\theta_{\text{old}}}$ and let $\mathcal{R}^{\pi}_{\theta,t}$ denote the route induced by $\pi_\theta$, with corresponding routing distributions $\rho^{\pi}_{\theta_{\text{old}},t}(\cdot\mid x,y_{<t})$ and $\rho^{\pi}_{\theta,t}(\cdot\mid x,y_{<t})$, respectively.
Router drift refers to the discrepancy $\mathcal{R}^{\pi}_{\theta_{\text{old}},t}\neq \mathcal{R}^{\pi}_{\theta,t}$, which yields inconsistent expert activation for tokens and can amplify the variance of importance ratios in PPO-style optimization~\citep{zheng2025stabilizing,yao2025offpolicy}.

\paragraph{Routing Replay.}
Routing replay fixes a cached replay route $\tilde{\mathcal{R}}_t$ during training to enforce routing consistency, denoted as:
\begin{equation}
\text{replay forward at step }t \,=\, \pi_\theta(y_t\mid x,y_{<t},\tilde{\mathcal{R}}_t),
\label{eq:replay_measure}
\end{equation}
where $\tilde{\mathcal{R}}_t$ is a cached replay route.
Routing replay sets $\tilde{\mathcal{R}}_t=\mathcal{R}^{\pi}_{\theta_{\text{old}},t}$ by replaying the route induced by the old snapshot~\citep{zheng2024sglang}.
This method stabilizes importance estimation by fixing the route during gradient evaluation, but may prevent gradients from reaching experts that would become active under subsequent router updates.

\subsection{Router Staleness}
\label{sec:router_staleness}

The stability induced by routing replay comes at the cost of staleness. As the training router evolves, a cached route may drift away from the route preferred by the current training policy. Let $\rho_t^{\mathrm{rep}}$ denote the route distribution induced by a replay scheme. For deterministic replay, $\rho_t^{\mathrm{rep}}$ is a point mass at the cached route. The TV distance is therefore binary on raw deterministic routes, and we use it here as a discrete-route summary. Appendix~\ref{app:prop-proof} introduces a soft layer-wise categorical relaxation on which the same divergence is differentiable and is used by the \method~predictive loss. We define token-level router staleness as
\begin{align*}
\mathcal{S}_t(\rho_t^{\mathrm{rep}})
=
D_{\mathrm{TV}}\!\left(
\rho^{\pi}_{\theta,t}(\cdot\mid x,y_{<t}),
\rho_t^{\mathrm{rep}}(\cdot\mid x,y_{<t})
\right).
\end{align*}
Let $g_t(P_t,\theta)$ denote the fixed-route PPO gradient kernel averaged under a route distribution $P_t$. Appendix~\ref{app:bias} shows that, for any bounded fixed-route gradient kernel, replay staleness controls the route-induced gradient deviation.
\begin{align*}
\left\|g_t(\rho^{\pi}_{\theta,t},\theta)-g_t(\rho_t^{\mathrm{rep}},\theta)\right\|
\le
2M_t\,\mathcal{S}_t(\rho_t^{\mathrm{rep}}).
\end{align*}
Thus, frozen replay is stable only when cached and current routes remain close.

\section{\method: Predictive Routing Replay}
\label{sec:method}
The design principle of \method~is to keep deterministic replay consistency while predicting the expert index likely to become active after short-horizon policy updates. \method~keeps the fixed-route pattern of routing replay but replaces the cached route with a predicted one. For clarity, we describe \method~relative to routing replay. The same prediction mechanism can be inserted before any deterministic replay route. For each token, \method~constructs predicted expert indices from route-recording features of the old snapshot $\pi_{\theta_{\text{old}}}$ and trains a lightweight evolution predictor against routing distributions observed later on the same cached batch. Figure~\ref{fig:method} summarizes our core idea.

\begin{figure}[t]
\vspace{-8pt}
\centering
\includegraphics[width=0.68\linewidth]{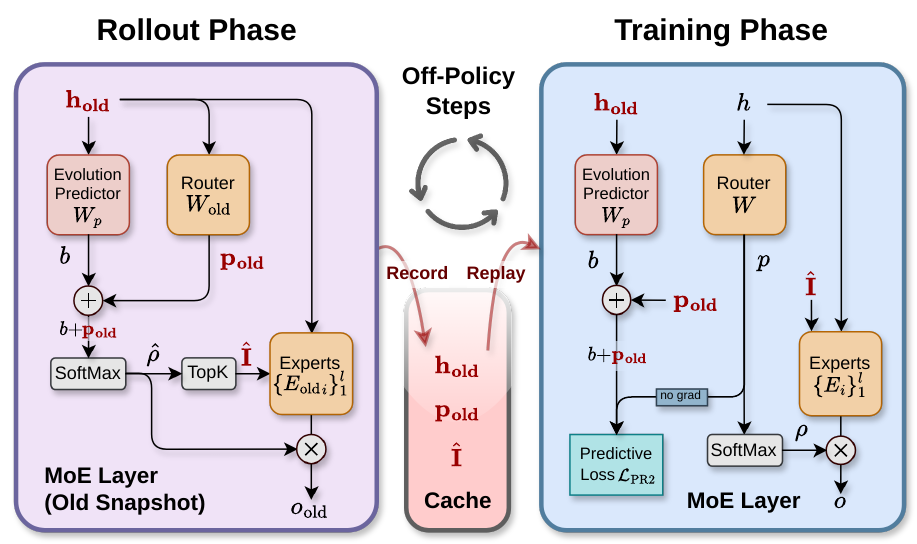}
\caption{\textbf{Detailed \method~workflow.} \textit{Left: rollout phase $\pi_{\theta_{\text{old}}}$.} \method~adds a learned logit bias before top-$k$ selection and caches the predicted expert indices $\hat{\mathcal{I}}$ and route-recording features $h_{\text{old}},p_{\text{old}}$. \textit{Right: training phase $\pi_\theta$.} Cached expert indices are replayed for route consistency. The evolution predictor is updated using cached features and the current routing distribution $\rho$ via the predictive loss $\mathcal{L}_{\mathrm{\method}}$.}
\label{fig:method}
\vspace{-12pt}
\end{figure}

\subsection{Predictive Replay Route}
\label{sec:pr2_method}

We first define how \method~predicts the replay route and how that route is reused during training.
The subscript ``old'' denotes route-recording quantities under the old policy snapshot
$\pi_{\theta_{\mathrm{old}}}$.

\paragraph{Route Prediction During Rollout.}
For each MoE layer $l$, let $h_{\mathrm{old},t}^{(l)}$ be the router input at token $t$ under
$\pi_{\theta_{\mathrm{old}}}$, and let
\begin{align*}
p_{\mathrm{old},t}^{(l)}
=
h_{\mathrm{old},t}^{(l)} W_{\mathrm{old}}^{(l)}
\end{align*}
be the corresponding router logits. Since the replay route must be fixed when data are recorded,
only these features are available at rollout time. \method~therefore introduces a lightweight
evolution predictor $W_p^{(l)}$, initialized at $\mathbf{0}$, which outputs an additive logit bias
\begin{align*}
b_t^{(l)}
=
h_{\mathrm{old},t}^{(l)} W_p^{(l)} .
\end{align*}
The corrected logits define the predictive routing distribution
\begin{equation}
\hat{\rho}_t^{(l)}
=
\mathrm{Softmax}\!\left(
p_{\mathrm{old},t}^{(l)} + b_t^{(l)}
\right).
\label{eq:pr2_predict}
\end{equation}
The predicted expert index $\hat{\mathcal{I}}_t^{(l)}$ is chosen by top-$k$ selection, forming
the predicted route $\hat{\mathcal{R}}_t$:
\begin{equation}
\begin{split}
\hat{\mathcal{I}}_t^{(l)}
&=
\mathrm{TopK}\!\left(\hat{\rho}_t^{(l)}, k\right), \\
\hat{\mathcal{R}}_t
&=
\left\{\hat{\mathcal{I}}_t^{(l)}\right\}_{l=1}^{L}.
\end{split}
\label{eq:pr2_route}
\end{equation}
Using this predicted expert index, the MoE output during route recording is
\begin{equation}
o_{\mathrm{old},t}^{(l)}
=
\sum_{j\in \hat{\mathcal{I}}_t^{(l)}}
\hat{\rho}_{t,j}^{(l)}
E_{\mathrm{old},j}^{(l)}
\!\left(h_{\mathrm{old},t}^{(l)}\right).
\label{eq:pr2_old_output}
\end{equation}
The same predicted route is then cached for replay, i.e.,
$\tilde{\mathcal{R}}_t=\hat{\mathcal{R}}_t$ in Eq.~\eqref{eq:replay_measure}.
Thus, \method~keeps the fixed replay pattern of routing replay, but replaces the expert index from old snapshot with a predicted expert index. Weighting with $\hat{\rho}_{t}^{(l)}$ is intentional:
the predictor is trained to match the current router distribution $\rho_t^{(l)}$, and
$\hat{\rho}_{t}^{(l)}$ approximates the weights that $\pi_\theta$ would assign on the same index.

\begin{wrapfigure}[18]{r}{0.52\columnwidth}
\vspace{-18pt}
\footnotesize
\vspace{8pt}

\hrule height 0.6pt
\vspace{1pt}

\refstepcounter{algorithm}
\noindent\textbf{Algorithm~\thealgorithm} Predictive Routing Replay (\method)
\label{alg:pr2}

\vspace{2pt}
\hrule height 0.4pt
\vspace{3pt}

\begin{algorithmic}[1]
    \STATE \textbf{Route Prediction During Rollout} on old snapshot $\theta_{\mathrm{old}}$.
    \FOR{each token $t$ and MoE layer $l$}
        \STATE Compute $p_{\mathrm{old},t}^{(l)} \gets h_{\mathrm{old},t}^{(l)}W_{\mathrm{old}}^{(l)}$.
        \STATE Compute $b_t^{(l)} \gets h_{\mathrm{old},t}^{(l)}W_p^{(l)}$.
        \STATE Set $\hat{\rho}_t^{(l)} \gets
        \mathrm{Softmax}(p_{\mathrm{old},t}^{(l)} + b_t^{(l)})$.
        \STATE Select $\hat{\mathcal{I}}_t^{(l)} \gets
        \mathrm{TopK}(\hat{\rho}_t^{(l)}, k)$ and dispatch to experts.
        \STATE Cache $\hat{\mathcal{I}}_t^{(l)}$ and
        $(h_{\mathrm{old},t}^{(l)}, p_{\mathrm{old},t}^{(l)})$.
    \ENDFOR

    \STATE \textbf{Replay During Training} on training model $\theta$.
    \FOR{each inner RL update}
        \FOR{each token $t$ and layer $l$}
            \STATE Compute $\rho_t^{(l)} \gets \mathrm{Softmax}(p_t^{(l)})$.
            \STATE Replay $\hat{\mathcal{I}}_t^{(l)}$ as the MoE expert index.
        \ENDFOR
        \STATE Calculate $\mathcal{L}_{\methodmath}$ and update $\{W_p^{(l)}\}_{l=1}^{L}$.
        \STATE Calculate the base RL loss and update $\theta$.
    \ENDFOR
\end{algorithmic}

\vspace{3pt}
\hrule height 0.6pt
\end{wrapfigure}
\vspace{-10pt}

\paragraph{Replay During Training.}
For each MoE layer $l$, during training, the current policy $\pi_\theta$ computes router logits
$p_t^{(l)}$ and the corresponding routing distribution
\begin{align*}
\rho_t^{(l)}
=
\mathrm{Softmax}\!\left(p_t^{(l)}\right).
\end{align*}
Similar to routing replay, \method~does not run fresh top-$k$ selection. Instead, it reuses the
cached expert index $\hat{\mathcal{I}}_t^{(l)}$ and restricts MoE computation to that set. The
layer output is
\begin{equation}
o_t^{(l)}
=
\sum_{j\in \hat{\mathcal{I}}_t^{(l)}}
\rho_{t,j}^{(l)}
E_j^{(l)}\!\left(h_t^{(l)}\right).
\label{eq:pr2_output}
\end{equation}
This design freezes only the indices of selected experts during MoE computation, while expert outputs and policy gradients are always evaluated with the current parameters $\theta$. The predictor is bypassed in the training pass and supervised separately by the predictive loss. An analogous \methodthree~variant is described in Appendix~\ref{app:r3_pr3}.

\subsection{Evolution Predictor Training}
\label{sec:train_strategy}

The evolution predictor is trained on cached route-recording features to match the routing
preference of the current training router on the same rollout batch.

\vspace{-2mm}
\paragraph{Predictive Loss.}
Given a cached token $t$ and layer $l$, we reconstruct the predictive routing distribution from
the route-recording features using Eq.~\eqref{eq:pr2_predict}, and compare it with the current
routing distribution $\rho_t^{(l)}$:
\begin{equation}
\mathcal{L}_{\methodmath}
=
\sum_{l=1}^{L}
\mathbb{E}_t
\left[
D_{\mathrm{KL}}
\left(
\left\langle \rho_t^{(l)} \right\rangle
\,\middle\|\,
\hat{\rho}_t^{(l)}
\right)
\right],
\label{eq:pr2_loss}
\end{equation}
where $\langle\cdot\rangle$ denotes the stop-gradient operator. Gradients update only the
predictor parameters $\{W_p^{(l)}\}_{l=1}^{L}$. The current router acts as a teacher but is not
changed by this auxiliary loss. We use Eq.~\eqref{eq:pr2_loss} as the default \method~predictive
loss and examine a delta-matching alternative in Appendix~\ref{app:additional_exp_analysis}.
The cached index for the current batch remains fixed during training, and the updated predictor
takes effect at the next route-recording phase.

\vspace{-2mm}
\paragraph{Predictive Loss as a Staleness Surrogate.}
The staleness view in Section~\ref{sec:router_staleness} suggests choosing cached indices that
remain close to routes preferred by the current router. The predictive loss bounds the route-induced
gradient deviation:
\begin{align*}
\mathbb{E}_t
\!\left[
\left\|
g_t(P_t^{\rho},\theta)
-
g_t(P_t^{\hat{\rho}},\theta)
\right\|
\right]
\le
M\sqrt{2\,\mathcal{L}_{\methodmath}},
\end{align*}
where $P_t^{\rho}$ and $P_t^{\hat{\rho}}$ are the current and predictive soft route distributions.
Appendix~\ref{app:prop-proof} further shows that the same KL objective controls the
current-router mass regret of the predicted hard top-$k$ indices.

\vspace{-2mm}
\paragraph{Off-Policy Training and Learning-Rate Scaling.}
The \method~predictive loss is evaluated after the first inner update, when the current router has
moved away from the route-recording snapshot and provides a nontrivial target. The evolution
predictors use a dedicated learning-rate multiplier $\alpha$, allowing them to track router
evolution on the timescale of RL updates without changing the base PPO objective. We provide
the per-token rollout and training pseudocode in Algorithm~\ref{alg:pr2}; the full version is
deferred to Appendix~\ref{app:pr2_algo}.

\section{Experiments}
\label{sec:exps}

We organize our experiments around three empirical questions: whether \method~improves downstream reasoning under repeated rollout reuse, whether it stabilizes PPO-style optimization, and whether it keeps replay routes closer to the evolving router. We first describe the shared experimental setup, and then evaluate these questions through downstream accuracy, training dynamics, and routing analysis.

\vspace{-2mm}
\subsection{Implementation Details}
\paragraph{Models and Training Data.}
We evaluate \method in three MoE-based LLM settings. Qwen3-30B-A3B-Base~\citep{yang2025qwen3} is trained on DAPO-17K~\citep{yu2025dapo}, Moonlight-16B-A3B~\citep{liu2025muonscalablellmtraining} is trained on GSM8K~\citep{cobbe2021gsm8k}, and OLMoE-1B-7B~\citep{muennighoff2024olmoe} is trained on the RLVR-GSM dataset\footnote{\url{https://huggingface.co/datasets/allenai/RLVR-GSM}} following its original implementation. All methods are implemented in the VeRL~\citep{sheng2025hybridflow} framework.

\vspace{-2mm}
\paragraph{Baselines and Off-Policy Strength.}
We implement \method on top of GRPO~\citep{shao2024deepseekmath}, and compare it with GSPO~\citep{zheng2025group} and routing replay~\citep{zheng2024sglang} under the same settings. Rollout routing replay~\citep{ma2025stabilizing} is compared with \methodthree in Appendix~\ref{app:additional_main_results}. We denote each off-policy setting as off-$\kappa$, where $\kappa=\frac{\mathcal{B}_{\text{global}}}{\mathcal{B}_{\text{update}}}$ is the ratio between the rollout batch size and the training update batch size. A larger $\kappa$ means that each rollout batch is reused for more training updates, corresponding to stronger off-policy reuse.

\vspace{-2mm}
\paragraph{Evaluation Benchmarks.}
For the main Qwen3-30B-A3B-Base setting, we evaluate off-2, off-4, and off-8 performance on AIME24 and AIME25~\citep{aime}, AMC23~\citep{MAA_AMC}, and HMMT25~\citep{balunovic_srimatharena_2025}.
For Moonlight-16B-A3B and OLMoE-1B-7B, we report GSM8K and MATH500~\citep{hendrycks2021measuring} results in the cross-model evaluation. Appendix~\ref{app:train_detail} provides detailed implementation settings.

\begin{table*}[ht]
    \centering
    \begingroup
    \scriptsize
    \setlength{\tabcolsep}{6pt}
    \renewcommand{\arraystretch}{1.08}
    \resizebox{0.8\linewidth}{!}{%
    \begin{tabular}{llccccc}
        \toprule
        \rowcolor{TableHeader}\begin{tabular}[c] {@{}l@{}}\textbf{Policy}\end{tabular}
        & \begin{tabular}[c]{@{}l@{}}\textbf{Method}\end{tabular}
        & \begin{tabular}[c]{@{}c@{}}\textbf{AIME24}\\\textnormal{(Avg@32)}\end{tabular}
        & \begin{tabular}[c]{@{}c@{}}\textbf{AIME25}\\\textnormal{(Avg@32)}\end{tabular}
        & \begin{tabular}[c]{@{}c@{}}\textbf{AMC23}\\\textnormal{(Avg@16)}\end{tabular}
        & \begin{tabular}[c]{@{}c@{}}\textbf{HMMT25}\\\textnormal{(Avg@16)}\end{tabular}
        & \begin{tabular}[c]{@{}c@{}}\textbf{Average}\end{tabular} \\
        \midrule
        \multirow{4}{*}{Off-2}
        & GRPO          & 31.04 & 24.68 & 74.06 & 9.16  & 34.74 \\
        & GSPO          & 30.42 & 23.54 & 77.18 & 11.88 & 35.76 \\
        & GRPO + \rtwo~    & 35.73 & 25.73 & 77.50 & 11.25 & 37.55 \\
        & \cellcolor{TableHighlight}\textbf{GRPO + \method} & \cellcolor{TableHighlight}\textbf{47.71} & \cellcolor{TableHighlight}\textbf{32.81} & \cellcolor{TableHighlight}\textbf{87.50} & \cellcolor{TableHighlight}\textbf{19.17} & \cellcolor{TableHighlight}\textbf{46.80} \\
        \midrule
        \multirow{4}{*}{Off-4}
        & GRPO          & 25.42 & 17.71 & 70.93 & 8.33  & 30.60 \\
        & GSPO          & 32.50 & 22.08 & 79.38 & 12.08 & 36.51 \\
        & GRPO + \rtwo~    & 28.13 & 21.25 & 73.59 & 10.00 & 33.24 \\
        & \cellcolor{TableHighlight}\textbf{GRPO + \method} & \cellcolor{TableHighlight}\textbf{47.40} & \cellcolor{TableHighlight}\textbf{31.67} & \cellcolor{TableHighlight}\textbf{85.47} & \cellcolor{TableHighlight}\textbf{20.42} & \cellcolor{TableHighlight}\textbf{46.24} \\
        \midrule
        \multirow{4}{*}{Off-8}
        & GRPO          & 25.00 & 15.93 & 72.81 & 7.50  & 30.31 \\
        & GSPO          & 30.93 & 22.18 & 72.03 & 7.50  & 33.16 \\
        & GRPO + \rtwo~    & 28.02 & 21.77 & 76.88 & 8.30  & 33.74 \\
        & \cellcolor{TableHighlight}\textbf{GRPO + \method} & \cellcolor{TableHighlight}\textbf{40.31} & \cellcolor{TableHighlight}\textbf{28.54} & \cellcolor{TableHighlight}\textbf{83.13} & \cellcolor{TableHighlight}\textbf{15.63} & \cellcolor{TableHighlight}\textbf{41.90} \\
        \bottomrule
    \end{tabular}
    }
    \endgroup
    \caption{\textbf{Downstream reasoning accuracy on Qwen3-30B-A3B-Base.} \rtwo~refers to routing replay for simplicity. Bold values mark the best accuracy in each metric column within an off-policy strength, and shaded rows mark \method.}
    \label{tab:main_downstream_qwen}
    \vspace{2pt}
\end{table*}

\vspace{-2mm}
\subsection{Downstream Reasoning Accuracy}
\paragraph{Main Comparison.}
Table~\ref{tab:main_downstream_qwen} reports the main comparison on Qwen3-30B-A3B-Base under off-2, off-4, and off-8. We compare \method with GRPO, GSPO, and routing replay. \method~achieves the best average accuracy in all three settings, reaching $46.80\%$ under off-2, $46.24\%$ under off-4, and $41.90\%$ under off-8. Compared with routing replay, \method~improves average accuracy by $9.25\%$, $13.00\%$, and $8.16\%$ points, respectively. Compared with GSPO, the corresponding gains are $11.04\%$, $9.73\%$, and $8.74\%$ points.
The gains remain substantial as rollout reuse becomes stronger, showing that \method~is not limited to mild off-policy settings. This trend is consistent with the router-staleness view. Routing replay preserves route consistency, but its cached indices cannot adapt to the routes preferred by the evolving router after repeated updates. In contrast, \method~moves the replay indices toward the router's short-horizon evolution while retaining deterministic replay during training.

\begin{table}[!t]
\centering
\scriptsize
\setlength{\tabcolsep}{6pt}
\renewcommand{\arraystretch}{1.08}
\resizebox{0.88\linewidth}{!}{%
\begin{tabular}{llccc!{\color{black!35}\vrule width 0.35pt}ccc}
\toprule
\rowcolor{TableHeader}
\multicolumn{2}{c}{} &
\multicolumn{3}{c!{\color{black!35}\vrule width 0.35pt}}{\textbf{Moonlight-16B-A3B}} &
\multicolumn{3}{c}{\textbf{OLMoE-1B-7B}} \\
\rowcolor{TableHeader}
\textbf{Policy} &
\textbf{Method} &
\begin{tabular}[c]{@{}c@{}}\textbf{GSM8K}\\\textnormal{\scriptsize(Avg@1)}\end{tabular} &
\begin{tabular}[c]{@{}c@{}}\textbf{MATH500}\\\textnormal{\scriptsize(Pass@4)}\end{tabular} &
\textbf{Average} &
\begin{tabular}[c]{@{}c@{}}\textbf{GSM8K}\\\textnormal{\scriptsize(Avg@1)}\end{tabular} &
\begin{tabular}[c]{@{}c@{}}\textbf{MATH500}\\\textnormal{\scriptsize(Avg@4)}\end{tabular} &
\textbf{Average} \\
\midrule
\multirow{4}{*}{Off-2}
& GRPO          & 81.80 & 41.60 & 61.70 & \textbf{73.76} & 20.90 & 47.33 \\
& GSPO          & 81.50 & 51.00 & 66.25 & 73.16 & 20.95 & 47.06 \\
& GRPO + \rtwo~ & 82.79 & 52.60 & 67.69 & 72.56 & 20.80 & 46.68 \\
\rowcolor{TableHighlight}
& \textbf{GRPO + \method} & \textbf{83.17} & \textbf{54.60} & \textbf{68.88} & 73.47 & \textbf{21.45} & \textbf{47.46} \\
\midrule
\multirow{4}{*}{Off-4}
& GRPO          & 62.09 & 36.40 & 49.25 & 72.48 & 20.15 & 46.32 \\
& GSPO          & 62.85 & 37.80 & 50.33 & \textbf{73.09} & 22.05 & 47.57 \\
& GRPO + \rtwo~ & 73.69 & 49.20 & 61.45 & 72.71 & 21.85 & 47.28 \\
\rowcolor{TableHighlight}
& \textbf{GRPO + \method} & \textbf{74.07} & \textbf{51.20} & \textbf{62.64} & 72.78 & \textbf{23.00} & \textbf{47.89} \\
\midrule
\multirow{4}{*}{Off-8}
& GRPO          & 58.07 & 39.60 & 48.84 & 70.66 & 20.80 & 45.73 \\
& GSPO          & 62.78 & 38.80 & 50.79 & 73.39 & 20.75 & 47.07 \\
& GRPO + \rtwo~ & 59.29 & 40.80 & 50.04 & \textbf{73.92} & 20.90 & 47.41 \\
\rowcolor{TableHighlight}
& \textbf{GRPO + \method} & \textbf{63.76} & \textbf{43.20} & \textbf{53.48} & 72.71 & \textbf{22.50} & \textbf{47.60} \\
\bottomrule
\end{tabular}
}
\vspace{8pt}
\caption{\textbf{Downstream reasoning accuracy on Moonlight-16B-A3B and OLMoE-1B-7B.} Columns are grouped by model. \rtwo~refers to routing replay for simplicity. Bold values mark the best accuracy in each metric column within an off-policy strength, and shaded rows mark \method.}
\label{tab:cross_model_pr2}
\vspace{-16pt}
\end{table}

\vspace{-2mm}
\paragraph{Cross-Model Evaluation.}
Table~\ref{tab:cross_model_pr2} extends the comparison to Moonlight-16B-A3B and OLMoE-1B-7B under matched off-policy strengths. On Moonlight-16B-A3B, GRPO + \method~obtains the best average accuracy in all three settings, improving over routing replay by $1.19\%$, $1.19\%$, and $3.44\%$ points under off-2, off-4, and off-8, respectively. The largest gain appears under the strongest rollout reuse, again matching the router-staleness view that frozen replay routes drift farther from the current router as $\kappa$ increases.
On OLMoE-1B-7B, the absolute gains are smaller, but GRPO + \method~still achieves the best average accuracy across off-2, off-4, and off-8. These results suggest that predictive routing replay is not specific to a single MoE backbone or dataset.

\vspace{-2mm}
\subsection{Training Stability}
\paragraph{Training Dynamics.}
Figure~\ref{fig:main_training_dynamics} compares off-2 training dynamics on Qwen3-30B-A3B-Base for GRPO, routing replay, and \method. We use these curves to examine whether repeated rollout reuse only affects final accuracy or also changes the optimization trajectory. Vanilla GRPO exhibits policy-gradient loss spikes and abrupt reward drops under rollout reuse. Routing replay reduces the most severe spikes, but still shows late-stage oscillations in clipping rates and entropy. In contrast, \method~keeps clipping rates substantially lower and yields smoother reward growth.
The response-entropy and length curves show a similar pattern. \method~avoids abrupt entropy collapse while allowing response length to increase steadily, suggesting that the policy can improve reasoning behavior without entering unstable high-clipping updates. This supports the interpretation that predictive routing replay stabilizes the optimization process.

\begin{figure*}[ht]
    \vspace{-2mm}
    \centering
    \includegraphics[width=0.96\linewidth]{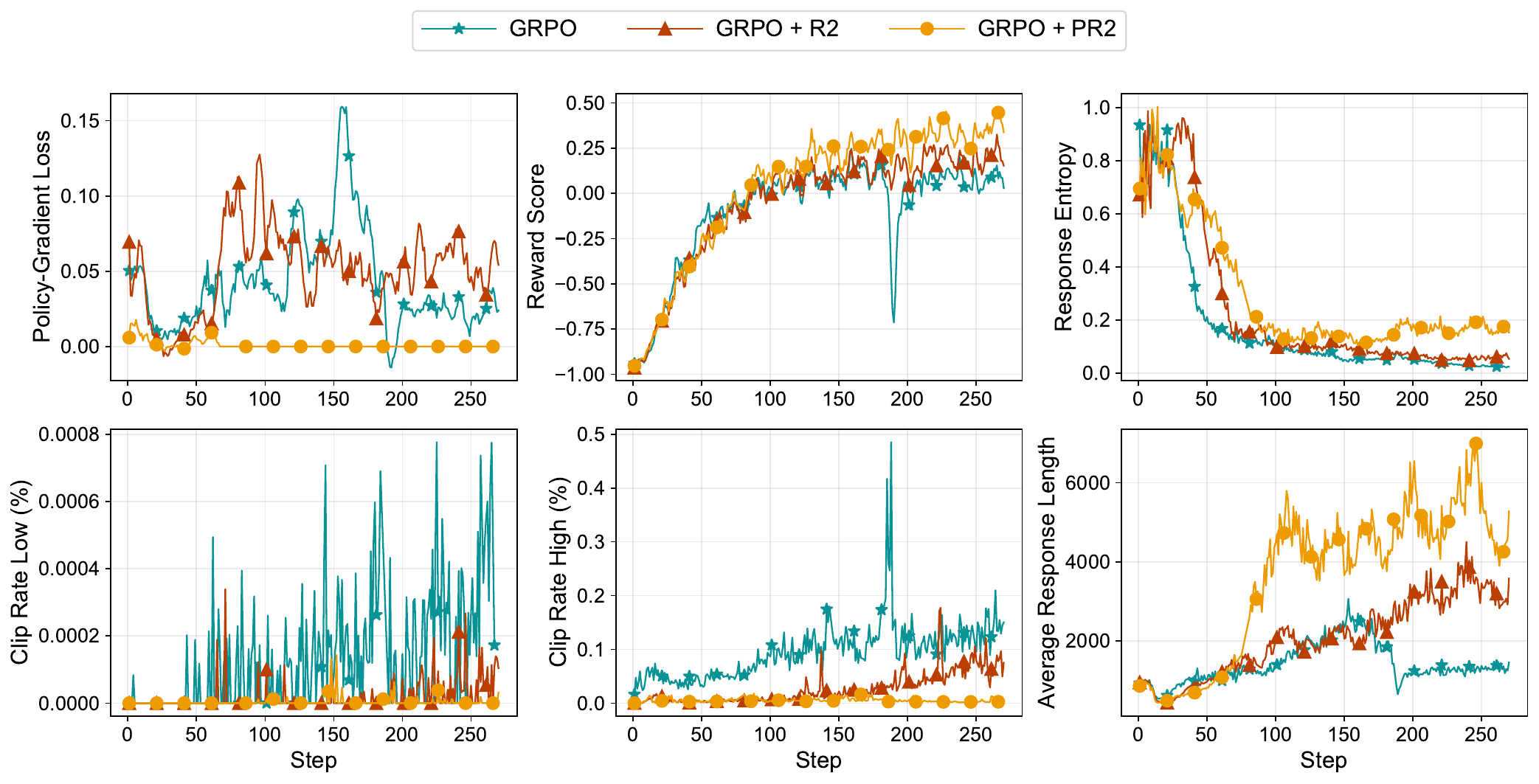}
    \vspace{-2mm}
    \caption{\textbf{Training dynamics on Qwen3-30B-A3B-Base under off-2.} We compare GRPO, routing replay, and \method~on policy-gradient loss, reward, response entropy, clipping rates, and response length. Smoother curves and lower clipping volatility indicate more stable updates.}
    \vspace{-2mm}
    \label{fig:main_training_dynamics}
\end{figure*}

\paragraph{Optimization Behavior.}
The curves in Figure~\ref{fig:main_training_dynamics} illustrate how routing mismatch affects PPO-style optimization. In off-policy MoE RL, stale routing can produce extreme token-level importance ratios, which appear as elevated clipping rates and can further lead to loss spikes and reward regressions. Routing replay mitigates the most severe instability by enforcing replay consistency, but its cached route can still become stale as repeated updates move the router. \method~improves the training trajectory itself by replaying predicted routes that better match short-horizon router evolution.

\subsection{Routing Prediction Analysis}
\label{sec:predictive_routing_analysis}

\paragraph{Top-$k$ Agreement and Route KL.}
Figure~\ref{fig:ana_topk_acc} reports top-$k$ agreement and route KL on Qwen3-30B-A3B-Base. These two measurements are directly tied to the behavior of the evolution predictor. We compute agreement as $|\hat{\mathcal{I}}_t^{(l)}\cap \mathcal{I}_t^{(l)}|\ \big/\ {k}$ and average over tokens and layers. Vanilla GRPO shows a clear drop in top-$k$ agreement and a rapid increase in route KL, indicating severe router drift during training. Routing replay reduces this drift but still becomes stale in the later stage. In contrast, \method~maintains higher agreement and lower KL across off-2, off-4, and off-8, suggesting that the predicted replay routes better track short-horizon router evolution under repeated rollout reuse.

\begin{figure}[!htbp]
    \centering
    \begin{subfigure}[t]{0.60\linewidth}
        \centering
        \includegraphics[width=\linewidth]{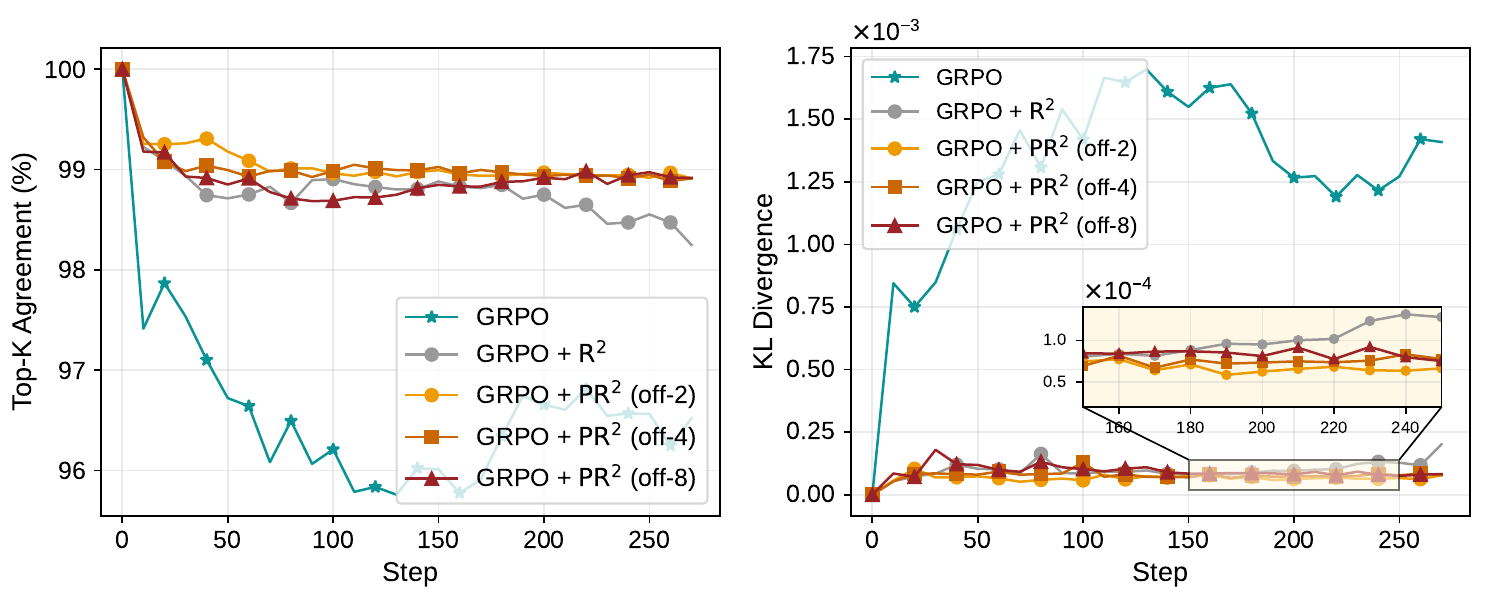}
        \subcaption{Top-$k$ agreement and KL divergence curves.}
        \label{fig:ana_topk_acc}
    \end{subfigure}
    \hfill
    \begin{subfigure}[t]{0.37\linewidth}
        \centering
        \includegraphics[width=\linewidth]{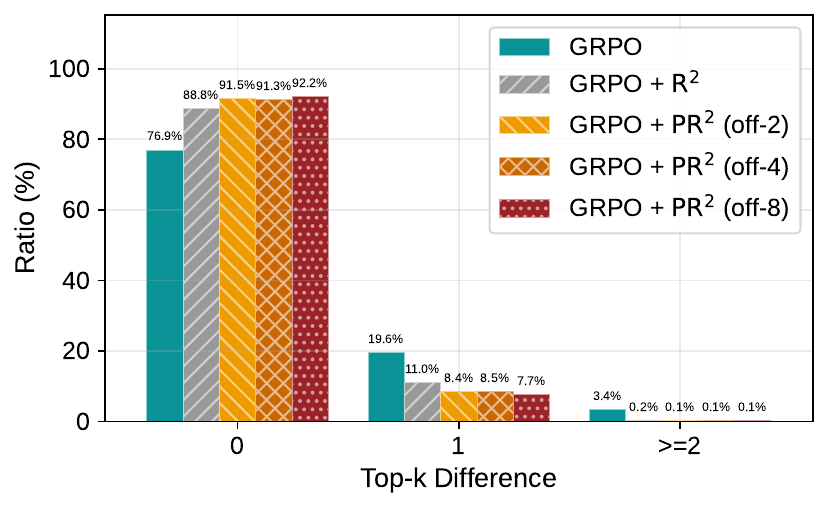}
        \subcaption{Routing deviation distribution.}
        \label{fig:ana_topk_dev}
    \end{subfigure}
    \vspace{8pt}
    \caption{\textbf{Routing prediction behavior during training.} \textbf{(a)} Top-$k$ agreement between cached and current routing indices, together with route KL divergence curves. Higher agreement and lower KL indicate better staleness control by \method. \textbf{(b)} Routing deviation distribution at the final step. More mass at $0$ difference and less mass at $\geq1$ differences indicate better route tracking. Baselines are from off-2 runs unless otherwise specified.}
    \vspace{-1em}
    \label{fig:ana_routing_tracking}
\end{figure}

\paragraph{Deviation Tails.}
Average agreement can hide rare but severe route flips. Figure~\ref{fig:ana_topk_dev} therefore reports the deviation count $k-|\hat{\mathcal{I}}_t^{(l)}\cap \mathcal{I}_t^{(l)}|$ on Qwen3-30B-A3B-Base. Compared with GRPO, \method~shifts substantially more mass to zero deviation. The zero-deviation ratio increases from $76.9\%$ for GRPO and $88.8\%$ for routing replay to $91.5\%$, $91.3\%$, and $92.2\%$ under off-2, off-4, and off-8, respectively. Correspondingly, the $1$-slot mismatch ratio decreases from $19.6\%$ and $11.0\%$ to $8.4\%$, $8.5\%$, and $7.7\%$, while the fraction of cases with $\geq2$ mismatched slots is reduced to only $0.1\%$. These results show that \method~not only improves average route tracking, but also suppresses large expert mismatches that can destabilize fixed-route updates.

\section{Conclusion}
\label{sec:conclusion}
This work identifies router drift as a source of instability in off-policy RL for MoE-based LLMs and introduces \textbf{Predictive Routing Replay (\method)}, which predicts short-horizon expert indices while preserving deterministic replay. \method addresses a central tension in routing replay: fixed routes help stabilize PPO-style importance estimation, but stale routes can prevent training from following router evolution after repeated policy updates. By training a lightweight router-side predictor on cached route-recording features, \method reduces router staleness while keeping the replay route fixed during training. Our theoretical analysis motivates the predictive KL objective through a staleness-controlled gradient-deviation bound, and our experiments show that \method improves reasoning accuracy, stabilizes PPO-style optimization, and yields closer route tracking across multiple MoE backbones and off-policy strengths. These results suggest that anticipating short-horizon router evolution provides a practical and effective replay alternative for stable RL training of MoE-based LLMs.

\section*{Acknowledgments}
The authors gratefully acknowledge support from the AMD University Program. We also thank the developers and maintainers of the open-source reinforcement-learning and MoE infrastructure that supported this work.

\bibliographystyle{plainnat}
\bibliography{references}

\clearpage
\appendix
\onecolumn
\raggedbottom

\section{RL Objectives and Notation}
\label{app:ppo}

\subsection{Notation and Setup}
We consider an autoregressive policy $\pi_\theta$ generating completion $y=(y_1,\dots,y_T)$ for prompt $x$.
At token $t$, the policy conditions on the prefix $(x,y_{<t})$ and predicts the next token $y_t$.
We use $\pi_{\theta_{\text{old}}}$ for the old policy snapshot that produced the rollout trajectories and $\pi_\theta$ for the current training policy. 
We use
$\mathrm{Clip}(u,1-\epsilon,1+\epsilon)$
for PPO clipping~\citep{schulman2017proximal,yu2025dapo}.

\subsection{Proximal Policy Optimization (PPO)}
PPO optimizes a clipped surrogate objective with a token-level ratio
\begin{equation}
r_t(\theta) = \frac{\pi_\theta(y_t\mid x,y_{<t})}{\pi_{\theta_{\text{old}}}(y_t\mid x,y_{<t})}.
\label{eq:ppo_ratio}
\end{equation}
Given advantage $A_t$,
\begin{equation}
\mathcal{L}_{\text{PPO}}(\theta) = \mathbb{E}\!\left[\sum_{t=1}^{T} \min\!\Big( r_t(\theta)\,A_t,\ \mathrm{Clip}(r_t(\theta),1-\epsilon,1+\epsilon)\,A_t \Big)\right].
\label{eq:ppo_obj}
\end{equation}
A KL regularizer to reference policy $\pi_{\text{ref}}$ is often added.
\begin{equation}
\mathcal{L}_{\text{PPO+KL}}(\theta) = \mathcal{L}_{\text{PPO}}(\theta) - \beta\,\mathbb{E}\!\left[\sum_{t=1}^{T} D_{\mathrm{KL}}\!\big(\pi_\theta(\cdot\mid x,y_{<t})\,\big\|\,\pi_{\text{ref}}(\cdot\mid x,y_{<t})\big)\right].
\label{eq:ppo_kl}
\end{equation}

\subsection{Group Relative Policy Optimization (GRPO)}
GRPO uses group-relative advantages without value-function training~\citep{shao2024deepseekmath,guo2025deepseek}.
Given a group of $G$ rollouts $\{y^{(g)}\}_{g=1}^G$ for prompt $x$, GRPO defines
\begin{equation}
A_t^{(g)} = \frac{R(x,y^{(g)}) - \frac{1}{G}\sum_{g'=1}^G R(x,y^{(g')})}{\mathrm{Std}\big(\{R(x,y^{(g')})\}_{g'=1}^G\big) + \epsilon_A}.
\end{equation}
The GRPO objective follows the PPO clipping form with these group-relative advantages.
\begin{equation}
\mathcal{L}_{\text{GRPO}}(\theta) = \mathbb{E}\!\left[\frac{1}{G}\sum_{g=1}^G\sum_{t=1}^T \min\!\Big( r_t^{(g)}(\theta)\,A_t^{(g)},\ \mathrm{Clip}(r_t^{(g)}(\theta),1-\epsilon,1+\epsilon)\,A_t^{(g)} \Big)\right],
\end{equation}
where $r_t^{(g)}(\theta)=\pi_\theta(y_t^{(g)}\mid x,y_{<t}^{(g)})\big/\pi_{\theta_{\text{old}}}(y_t^{(g)}\mid x,y_{<t}^{(g)})$ is the token-level importance ratio for the $g$-th group completion, and the leading $1/G$ keeps the loss magnitude scale-invariant to group size.

In our implementation we follow the Clip-Higher variant of DAPO~\citep{yu2025dapo}, which replaces the symmetric $\epsilon$ in the clip with an asymmetric pair $(\epsilon_{\text{low}}, \epsilon_{\text{high}})$:
\begin{equation}
\mathcal{L}_{\text{GRPO}}^{\text{DAPO}}(\theta) = \mathbb{E}\!\left[\frac{1}{G}\sum_{g=1}^G\sum_{t=1}^T \min\!\Big( r_t^{(g)}(\theta)\,A_t^{(g)},\ \mathrm{Clip}(r_t^{(g)}(\theta),1-\epsilon_{\text{low}},1+\epsilon_{\text{high}})\,A_t^{(g)} \Big)\right].
\label{eq:grpo_dapo}
\end{equation}

\section{Theoretical Details}
\label{app:theory}

\subsection{Routing Replay Gradient Deviation Bound}
\label{app:bias}

We restate the TV-based gradient-deviation bound used in Section~\ref{sec:method}.
Let $\psi_t(\mathcal R_t,\theta)$ denote the token-level PPO gradient contribution evaluated under fixed route $\mathcal R_t$. For any route distribution $P_t$, define
\begin{equation}
g_t(P_t,\theta) := \mathbb{E}_{\mathcal R_t\sim P_t(\cdot\mid x,y_{<t})}\!\left[\psi_t(\mathcal R_t,\theta)\right].
\end{equation}
For any two route distributions $P_t$ and $Q_t$, assume the fixed-route gradient kernel is locally bounded on the parameter region visited during training, with $M_t\ge\|\psi_t(\mathcal R_t,\theta)\|$ for all $\mathcal R_t$. The variational characterization of total variation gives
\begin{equation}
\|g_t(P_t,\theta)-g_t(Q_t,\theta)\| \le 2M_t\,D_{\mathrm{TV}}\!\left(P_t,Q_t\right).
\label{eq:tv_grad_bias}
\end{equation}
Here and below, common conditioning on $(x,y_{<t})$ is suppressed inside divergences. Eq.~\eqref{eq:tv_grad_bias} links router staleness and route-induced gradient deviation without depending on the specific PPO clipping form.

\subsection{\texorpdfstring{Soft \method~Surrogate and Top-$k$ Bridge}{Soft \method~Surrogate and Top-k Bridge}}
\label{app:prop-proof}

We analyze the \method~predictive loss through a layer-wise relaxation. Instead of treating a layer route as a top-$k$ set, the relaxed route draws one categorical expert variable $Z_t^{(l)}\in\{1,\ldots,N\}$ from $\rho_t^{(l)}$ or $\hat\rho_t^{(l)}$. Let $Z_t=(Z_t^{(1)},\ldots,Z_t^{(L)})$.
\begin{equation}
P_t^{\rho}(Z_t) = \prod_{l=1}^{L}\rho_t^{(l)}(Z_t^{(l)}), \qquad P_t^{\hat\rho}(Z_t) = \prod_{l=1}^{L}\hat\rho_t^{(l)}(Z_t^{(l)}).
\label{eq:categorical_route_relaxation}
\end{equation}
This relaxation is not the hard top-$k$ replay distribution. It is the soft distributional object optimized by Eq.~\eqref{eq:pr2_loss}, where the current routing distribution is evaluated under the same replay-conditioned forward pass used for training. In this way, the bound characterizes staleness within the fixed-route training computation rather than the fully free-running MoE computation. The analysis uses the layer-summed definition in Eq.~\eqref{eq:pr2_loss}. If one reports a layer-averaged variant, the corresponding bound carries an additional factor $L$ inside the square root.

\begin{proposition}[Predictive loss controls soft route-gradient deviation]
\label{prop:pr2_bias}
Under the local boundedness assumption $\|\psi_t(Z_t,\theta)\|\le M$ for all tokens $t$ and relaxed routes $Z_t$, and under the categorical-route relaxation in Eq.~\eqref{eq:categorical_route_relaxation},
\begin{equation}
\mathbb E_t\!\left[\left\|g_t(P_t^{\rho},\theta)-g_t(P_t^{\hat\rho},\theta)\right\|\right] \le M\sqrt{2\,\mathcal L_{\methodmath}}.
\end{equation}
\end{proposition}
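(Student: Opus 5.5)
The plan is to chain three standard inequalities: the TV gradient-deviation bound in Eq.~\eqref{eq:tv_grad_bias}, Pinsker's inequality, and Jensen's inequality over tokens. The key observation is that the KL between the product measures $P_t^{\rho}$ and $P_t^{\hat\rho}$ in Eq.~\eqref{eq:categorical_route_relaxation} splits exactly into the layer-summed KL that appears inside $\mathcal L_{\methodmath}$.

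\textbf{Step 1.} Fix a token $t$ and apply Eq.~\eqref{eq:tv_grad_bias} with $P_t=P_t^{\rho}$, $Q_t=P_t^{\hat\rho}$ and the uniform bound $M_t\le M$, now read on relaxed routes $Z_t$. This gives
\[
\bigl\|g_t(P_t^{\rho},\theta)-g_t(P_t^{\hat\rho},\theta)\bigr\|\le 2M\,D_{\mathrm{TV}}(P_t^{\rho},P_t^{\hat\rho}).
\]
The bound holds because $g_t(P,\theta)-g_t(Q,\theta)=\sum_{z}(P(z)-Q(z))\psi_t(z,\theta)$ and $\sum_z|P(z)-Q(z)|=2D_{\mathrm{TV}}$. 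The kernel $\psi_t$ is evaluated in the replay-conditioned forward pass, so it does not depend on which of the two distributions is used, and the difference is linear in $P-Q$.

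\textbf{Step 2.} Apply Pinsker's inequality,
\[
D_{\mathrm{TV}}(P,Q)\le\sqrt{\tfrac12 D_{\mathrm{KL}}(P\|Q)},
\]
so that $2M D_{\mathrm{TV}}\le M\sqrt{2D_{\mathrm{KL}}(P_t^{\rho}\|P_t^{\hat\rho})}$. Since both measures factorize over layers, the chain rule (additivity of KL for products) gives
\[
D_{\mathrm{KL}}(P_t^{\rho}\|P_t^{\hat\rho})=\sum_{l=1}^{L}D_{\mathrm{KL}}(\rho_t^{(l)}\|\hat\rho_t^{(l)}).
\]
The stop-gradient in Eq.~\eqref{eq:pr2_loss} does not change the value of this quantity.

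\textbf{Step 3.} Take $\mathbb E_t$ and use concavity of $\sqrt{\cdot}$ (Jensen):
\[
\mathbb E_t\Bigl[\sqrt{2\textstyle\sum_l D_{\mathrm{KL}}}\Bigr]\le\sqrt{2\,\mathbb E_t\textstyle\sum_l D_{\mathrm{KL}}}.
\]
Exchanging the finite sum over layers with $\mathbb E_t$ gives exactly $\sqrt{2\mathcal L_{\methodmath}}$, which completes the bound.

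The only step needing care is Step 1. One must make sure the TV bound is applied on the relaxed route space, with $\psi_t$ defined on categorical $Z_t$ and bounded by $M$ there. One must also make sure the same $\psi_t$ serves for both distributions, which is where the replay-conditioned forward pass assumption stated before the proposition is used. The remaining steps are routine.
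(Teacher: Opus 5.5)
Your proposal is correct and follows the paper's proof essentially step for step. Both chain the TV gradient-deviation bound of Eq.~\eqref{eq:tv_grad_bias} on the relaxed route space, Pinsker's inequality, additivity of KL over the layer-wise product measures, and Jensen's inequality for $\sqrt{\cdot}$ after taking $\mathbb{E}_t$ with $M_t\le M$. Your explicit derivation of the $2M\,D_{\mathrm{TV}}$ bound via $\sum_z|P(z)-Q(z)|=2D_{\mathrm{TV}}$ just spells out the variational characterization that the paper cites without proof.
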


\begin{proof}
Let $P_t^{\rho}$ and $P_t^{\hat\rho}$ be the categorical-route distributions in Eq.~\eqref{eq:categorical_route_relaxation}, and write $\Delta g_t=g_t(P_t^{\rho},\theta)-g_t(P_t^{\hat\rho},\theta)$. Applying Eq.~\eqref{eq:tv_grad_bias} gives $\|\Delta g_t\| \le 2M_t\,D_{\mathrm{TV}}(P_t^{\rho},P_t^{\hat\rho})$, and Pinsker's inequality yields $D_{\mathrm{TV}}(P_t^{\rho},P_t^{\hat\rho}) \le \sqrt{\tfrac{1}{2}D_{\mathrm{KL}}(P_t^{\rho}\|P_t^{\hat\rho})}$, so
\begin{equation}
\|\Delta g_t\| \le 2M_t\sqrt{\tfrac{1}{2}\,D_{\mathrm{KL}}\!\left(P_t^{\rho}\|P_t^{\hat\rho}\right)}.
\end{equation}
Under the layer-wise categorical relaxation, $D_{\mathrm{KL}}(P_t^{\rho}\|P_t^{\hat\rho}) = \sum_{l=1}^{L} D_{\mathrm{KL}}(\rho_t^{(l)}\|\hat\rho_t^{(l)})$, hence
\begin{equation}
\|\Delta g_t\| \le 2M_t\sqrt{\tfrac{1}{2}\sum_{l=1}^{L}D_{\mathrm{KL}}\!\left(\rho_t^{(l)}\|\hat{\rho}_t^{(l)}\right)}.
\end{equation}
Taking expectation over $t$, using $M_t\le M$, and applying Jensen's inequality to the $\sqrt{\cdot}$, we obtain $\mathbb E_t[\|\Delta g_t\|] \le M\sqrt{2\sum_{l=1}^{L}\mathbb E_t[D_{\mathrm{KL}}(\rho_t^{(l)}\|\hat\rho_t^{(l)})]} = M\sqrt{2\,\mathcal L_{\methodmath}}$, where the last equality uses Eq.~\eqref{eq:pr2_loss}.
\end{proof}

\paragraph{Remark on $k$-Slot Relaxations.}
The categorical relaxation above samples one expert variable per layer because it is the soft object directly matched by Eq.~\eqref{eq:pr2_loss}. A closer $k$-slot relaxation could sample $(Z_{t,1}^{(l)},\ldots,Z_{t,k}^{(l)})$ from $\rho_t^{(l)}$. The same TV and Pinsker argument would introduce the corresponding slot factor, while Lemma~\ref{lemma:topk_regret} gives a deterministic support-level (activated expert indices) guarantee for the hard top-$k$ expert support used by \method.

\begin{lemma}[Predictive KL controls top-$k$ support regret]
\label{lemma:topk_regret}
Consider a single layer and write $p=\rho_t^{(l)}$ and $\hat p=\hat\rho_t^{(l)}$. Let $S^\star=\mathrm{TopK}(p,k)$ and $\hat S=\mathrm{TopK}(\hat p,k)$, with $p(S)=\sum_{j\in S}p_j$. Then
\[
p(S^\star)-p(\hat S)
\le
\sqrt{2\,D_{\mathrm{KL}}(p\|\hat p)}.
\]
Consequently, if the current router has top-$k$ margin $\Delta_k(p)=p_{(k)}-p_{(k+1)}>0$ and $D_{\mathrm{KL}}(p\|\hat p)<\Delta_k(p)^2/2$, then $\mathrm{TopK}(\hat p,k)=\mathrm{TopK}(p,k)$.
\end{lemma}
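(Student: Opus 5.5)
The plan is a three-term telescoping decomposition. We use the optimality of $\hat S$ for $\hat p$, bound the two remaining terms by the total variation distance, and then apply Pinsker's inequality, as in the proof of Proposition~\ref{prop:pr2_bias}. Concretely, we write
\[
p(S^\star)-p(\hat S)
=\bigl[p(S^\star)-\hat p(S^\star)\bigr]
+\bigl[\hat p(S^\star)-\hat p(\hat S)\bigr]
+\bigl[\hat p(\hat S)-p(\hat S)\bigr].
\]
The middle bracket is nonpositive. Indeed, $\hat S=\mathrm{TopK}(\hat p,k)$ maximizes $S\mapsto\hat p(S)$ over all $k$-subsets of $\{1,\dots,N\}$, since the mass of a $k$-set is maximized by taking the $k$ largest coordinates; ties can be broken arbitrarily. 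For the outer brackets we use the variational form of total variation, $\sup_{A}|p(A)-\hat p(A)|=D_{\mathrm{TV}}(p,\hat p)$. Each of the two terms is therefore at most $D_{\mathrm{TV}}(p,\hat p)$, which gives
\[
p(S^\star)-p(\hat S)\le 2D_{\mathrm{TV}}(p,\hat p).
\]
Pinsker's inequality $D_{\mathrm{TV}}(p,\hat p)\le\sqrt{\tfrac12 D_{\mathrm{KL}}(p\|\hat p)}$ then yields $2\sqrt{\tfrac12 D_{\mathrm{KL}}(p\|\hat p)}=\sqrt{2D_{\mathrm{KL}}(p\|\hat p)}$, which is the first claim.

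For the consequence, we first note that $\Delta_k(p)>0$ makes $S^\star$ unique. We then argue by contradiction: suppose $\hat S\neq S^\star$. Both sets have size $k$, so $m:=|S^\star\setminus\hat S|=|\hat S\setminus S^\star|\ge1$. Every $j\in S^\star\setminus\hat S$ satisfies $p_j\ge p_{(k)}$, and every $j\in\hat S\setminus S^\star$ lies outside the top $k$ and so satisfies $p_j\le p_{(k+1)}$. Hence
\[
p(S^\star)-p(\hat S)=p(S^\star\setminus\hat S)-p(\hat S\setminus S^\star)\ge m\,\Delta_k(p)\ge\Delta_k(p).
\]
Combining this with the first part gives $\Delta_k(p)\le\sqrt{2D_{\mathrm{KL}}(p\|\hat p)}$. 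This contradicts the hypothesis $D_{\mathrm{KL}}(p\|\hat p)<\Delta_k(p)^2/2$, so $\hat S=S^\star$.

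No step is a serious obstacle; the only point needing care is the middle bracket. The argument relies on the top-$k$ set maximizing set mass, and it must be stated so that it is valid under any tie-breaking for $\hat p$. Tie-breaking for $p$ is irrelevant in the consequence because $\Delta_k(p)>0$ makes $S^\star$ unique.
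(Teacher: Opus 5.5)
Your proof is correct, and your first inequality follows the paper exactly. The paper uses the same three-term telescoping, the same optimality of $\hat S$ for $\hat p$, the variational bound $|p(S)-\hat p(S)|\le D_{\mathrm{TV}}(p,\hat p)$ on the outer terms, and Pinsker.

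The margin consequence is where you take a different route. The paper does not reuse the regret bound there. It takes singletons in the variational definition to get $\|p-\hat p\|_\infty\le D_{\mathrm{TV}}(p,\hat p)\le\sqrt{\tfrac12 D_{\mathrm{KL}}(p\|\hat p)}<\Delta_k(p)/2$. From this it concludes that every $i\in S^\star$ still beats every $j\notin S^\star$ under $\hat p$.

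You instead obtain the consequence from the first claim itself. Your swap-counting lower bound $p(S^\star)-p(\hat S)\ge |S^\star\setminus\hat S|\,\Delta_k(p)$ contradicts $\sqrt{2D_{\mathrm{KL}}(p\|\hat p)}<\Delta_k(p)$.

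Both arguments reach the same threshold $D_{\mathrm{KL}}(p\|\hat p)<\Delta_k(p)^2/2$. What each buys:
\begin{itemize}
\item Your route makes the word ``Consequently'' literal.
\item It also gives the graded bound $|S^\star\setminus\hat S|\le\sqrt{2D_{\mathrm{KL}}(p\|\hat p)}/\Delta_k(p)$, which still says something when the hypothesis fails.
\item Your middle-bracket step holds for every maximizer of $\hat p$, so your conclusion covers every tie-breaking. That is equivalent to the strict separation $\min_{i\in S^\star}\hat p_i>\max_{j\notin S^\star}\hat p_j$, which the paper obtains directly.
\item The paper's coordinatewise route additionally controls individual expert probabilities. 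This says more than set equality, but for the lemma as stated the two arguments are interchangeable.
\end{itemize}
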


\begin{proof}[Proof of Lemma~\ref{lemma:topk_regret}]
Let $S^\star=\mathrm{TopK}(p,k)$ and $\hat S=\mathrm{TopK}(\hat p,k)$. Since $\hat S$ maximizes $\hat p(S)$ over all $k$-element supports, we have $\hat p(\hat S)\ge \hat p(S^\star)$. Decomposing $p(S^\star)-p(\hat S)$ into three telescoping terms,
\begin{align*}
p(S^\star)-p(\hat S)
&= \big(p(S^\star)-\hat p(S^\star)\big) + \big(\hat p(S^\star)-\hat p(\hat S)\big) + \big(\hat p(\hat S)-p(\hat S)\big) \\
&\le D_{\mathrm{TV}}(p,\hat p) + 0 + D_{\mathrm{TV}}(p,\hat p) \\
&\le \sqrt{2\,D_{\mathrm{KL}}(p\|\hat p)},
\end{align*}
where the second line uses $\hat p(S^\star)-\hat p(\hat S)\le 0$ together with the variational bound $|p(S)-\hat p(S)|\le D_{\mathrm{TV}}(p,\hat p)$ for any subset $S$, and the third line uses Pinsker's inequality.

For the margin statement, the variational definition of total variation gives the elementwise bound
\begin{equation*}
\|p-\hat p\|_\infty \le D_{\mathrm{TV}}(p,\hat p),
\end{equation*}
obtained by taking the singleton $A=\{j\}$ in $D_{\mathrm{TV}}(p,\hat p)=\max_A|p(A)-\hat p(A)|$, and Pinsker's inequality further yields
\begin{equation*}
D_{\mathrm{TV}}(p,\hat p) \le \sqrt{\tfrac{1}{2}\,D_{\mathrm{KL}}(p\|\hat p)}.
\end{equation*}
Hence whenever $\Delta_k(p)=p_{(k)}-p_{(k+1)}>0$ and $D_{\mathrm{KL}}(p\|\hat p)<\Delta_k(p)^2/2$, we obtain $\|p-\hat p\|_\infty<\Delta_k(p)/2$. Every top-$k$ expert under $p$ therefore remains above every non-top-$k$ expert under $p$ when scored by $\hat p$, so $\mathrm{TopK}(\hat p,k)=\mathrm{TopK}(p,k)$.
\end{proof}

\section{Rollout-Engine Behavior Policy}
\label{app:r3_pr3}

The main text in Section~\ref{sec:preliminary} treats the trajectory source as a single old policy snapshot $\pi_{\theta_{\text{old}}}$. In disaggregated training pipelines, rollouts are actually produced by a separate rollout engine whose effective behavior policy $\mu_{\theta_{\text{old}}}$ can differ from the training-engine snapshot $\pi_{\theta_{\text{old}}}$ at the implementation level. Following~\cite{zheng2025stabilizing}, the token-level importance ratio admits the decomposition
\begin{equation}
    r_t(\theta)
    =
    \underbrace{
    \frac{\pi_{\theta_{\text{old}}}(y_t\mid x,y_{<t})}{\mu_{\theta_{\text{old}}}(y_t\mid x,y_{<t})}
    }_{\text{system discrepancy}}
    \cdot
    \underbrace{
    \frac{\pi_{\theta}(y_t\mid x,y_{<t})}{\pi_{\theta_{\text{old}}}(y_t\mid x,y_{<t})}
    }_{\text{policy staleness}},
    \label{eq:ratio_decomp_app}
\end{equation}
which factorizes the rollout/training-engine gap from update-induced staleness. Each RL step then involves three MoE forward passes: a rollout pass under $\mu_{\theta_{\text{old}}}$ in the rollout engine that generates trajectories, a log-prob pass under $\pi_{\theta_{\text{old}}}$ in the training engine that evaluates the snapshot likelihoods used as the importance-ratio denominator, and a gradient pass under $\pi_\theta$ in the training engine that evaluates and backpropagates the surrogate loss. Each pass can pick different expert indices, and replay schemes resolve this by reusing cached indices across the latter two training-engine passes.

\subsection{Rollout Routing Replay}
Rollout routing replay~\citep{ma2025stabilizing} caches the rollout-engine route $\mathcal{I}_{\text{old},t}^{(l)}$ produced by $\mu_{\theta_{\text{old}}}$ at each token $t$ and layer $l$ during the rollout pass, and replays it in \emph{both} training-engine passes. The log-prob pass under $\pi_{\theta_{\text{old}}}$ and the gradient pass under $\pi_\theta$ each substitute $\mathcal{I}_{\text{old},t}^{(l)}$ for their own top-$k$ selection, so the two forward passes share the same expert indices. Reusing the indices is what makes the snapshot ratio $\pi_\theta(y_t\mid x,y_{<t})/\pi_{\theta_{\text{old}}}(y_t\mid x,y_{<t})$ in the importance weight well-defined despite the rollout-training engine gap, at the cost of freezing the indices to $\mathcal{I}_{\text{old},t}^{(l)}$ throughout.

\subsection{\methodthree: Predictive Rollout Routing Replay}
\methodthree~inserts the \method~prediction mechanism before rollout routing replay's frozen indices. The route-recording phase runs under $\mu_{\theta_{\text{old}}}$ in the rollout engine. We reuse the \method~conventions of Section~\ref{sec:method}, with $h_{\text{old},t}^{(l)}$, $p_{\text{old},t}^{(l)} = h_{\text{old},t}^{(l)} W_{\text{old}}^{(l)}$, $E_{\text{old},j}^{(l)}$ now denoting rollout-engine quantities. The predictive distribution and predicted top-$k$ indices follow Eqs.~\eqref{eq:pr2_predict}--\eqref{eq:pr2_route},
\begin{equation}
\hat\rho_t^{(l)} = \mathrm{Softmax}\!\left( p_{\text{old},t}^{(l)} + h_{\text{old},t}^{(l)} W_p^{(l)} \right), \qquad \hat{\mathcal{I}}_t^{(l)} = \mathrm{TopK}\!\left(\hat\rho_t^{(l)},\, k\right),
\end{equation}
and the route-recording output reuses the same form as Eq.~\eqref{eq:pr2_old_output},
\begin{equation}
o_{\text{old},t}^{(l)} = \sum_{j \in \hat{\mathcal{I}}_t^{(l)}} \hat\rho_{t,j}^{(l)}\, E_{\text{old},j}^{(l)}\!\left(h_{\text{old},t}^{(l)}\right),
\end{equation}
which drives autoregressive generation. We cache $\hat{\mathcal{I}}_t^{(l)}$ together with the route-recording features $(h_{\text{old},t}^{(l)},\, p_{\text{old},t}^{(l)})$. Both training-engine passes then replay $\hat{\mathcal{I}}_t^{(l)}$ as the MoE expert indices. The log-prob pass under $\pi_{\theta_{\text{old}}}$ uses the cached indices with the snapshot's own routing weights and expert parameters, and the gradient pass under $\pi_\theta$ follows Eq.~\eqref{eq:pr2_output} with the cached indices. The predictor is bypassed in both replay passes and is supervised by the KL objective in Eq.~\eqref{eq:pr2_loss}, evaluated on the cached rollout-engine features against the routing distribution observed under $\pi_\theta$ on the same batch. \methodthree~thus stands to rollout routing replay~exactly as \method~stands to routing replay. The predicted indices replace the frozen old-snapshot indices, and are replayed across training-engine forward passes. The \methodthree~versus rollout routing replay~comparison on Qwen3-30B-A3B-Base is reported in Table~\ref{tab:qwen_r3_comparison}.

\section{Predictive Routing Replay Pseudocode}
\label{app:pr2_algo}

Algorithms~\ref{alg:pr2_inference} and~\ref{alg:pr2_training} give the per-token, per-layer route-recording and replay steps used in Section~\ref{sec:method}. The per-layer predictive losses are aggregated as in Eq.~\eqref{eq:pr2_loss}.

\begin{algorithm}[H]
  \caption{\method~route recording under the rollout policy $\pi_{\theta_{\text{old}}}$.}
  \label{alg:pr2_inference}
  \begin{algorithmic}[1]
    \REQUIRE
      Token position $\tau$, layer index $l$, hidden feature $h_{\text{old}}$, router weights $W_{\text{old}}^{(l)}$, predictor $W_p^{(l)}$, experts $\{E_{\text{old},j}^{(l)}\}_{j=1}^{N}$, index cache $\mathcal{P}$, and feature cache $\mathcal{F}$.
    \ENSURE
      MoE layer output $o_{\text{old}}$.
    \STATE Form old-snapshot logits and predictor bias $p_{\text{old}}\leftarrow h_{\text{old}}W_{\text{old}}^{(l)}$, $b\leftarrow h_{\text{old}}W_p^{(l)}$.
    \STATE Construct the predictive distribution and predicted top-$k$ indices $\hat{\rho}\leftarrow\mathrm{Softmax}(p_{\text{old}}+b)$, $\hat{\mathcal{I}}\leftarrow\mathrm{TopK}(\hat{\rho},k)$.
    \STATE Cache the predicted indices $\mathcal{P}[(\tau,l)]\leftarrow\hat{\mathcal{I}}$, and store route-recording features $\mathcal{F}[(\tau,l)]\leftarrow(h_{\text{old}},p_{\text{old}})$ when $(\tau,l)$ is retained by the fixed-length feature cache.
    \STATE Compute the layer output on the predicted indices $o_{\text{old}}\leftarrow\sum_{j\in\hat{\mathcal{I}}}\hat{\rho}_{j}\,E_{\text{old},j}^{(l)}(h_{\text{old}})$.
  \end{algorithmic}
\end{algorithm}

\begin{algorithm}[H]
  \caption{\method~replay and predictor update under the training policy $\pi_{\theta}$.}
  \label{alg:pr2_training}
  \begin{algorithmic}[1]
    \REQUIRE
      Mini-step index $i\in\{1,\dots,\kappa\}$, token position $\tau$, layer index $l$, hidden feature $h$, router weights $W^{(l)}$, predictor $W_p^{(l)}$, experts $\{E_j^{(l)}\}_{j=1}^{N}$, index cache $\mathcal{P}$, feature cache $\mathcal{F}$.
    \ENSURE
      MoE layer output $o$, per-layer predictive loss $\ell_{\methodmath}^{(l)}$.
    \STATE Compute current routing weights and load the cached indices $p\leftarrow hW^{(l)}$, $\rho\leftarrow\mathrm{Softmax}(p)$, $\hat{\mathcal{I}}\leftarrow\mathcal{P}[(\tau,l)]$.
    \STATE Compute the layer output with current experts on the cached indices $o\leftarrow\sum_{j\in\hat{\mathcal{I}}}\rho_j\,E_j^{(l)}(h)$.
    \IF{$i=1 \lor (\tau,l)\notin\mathcal{F}$}
      \STATE Skip the predictive loss on the on-policy first mini-step or when no cached feature is available $\ell_{\methodmath}^{(l)}\leftarrow0$.
    \ELSE
      \STATE Reconstruct the predictive distribution from cached features $(h_{\text{old}},p_{\text{old}})\leftarrow\mathcal{F}[(\tau,l)]$, $\hat{\rho}\leftarrow\mathrm{Softmax}(p_{\text{old}}+h_{\text{old}}W_p^{(l)})$.
      \STATE Compute the per-layer predictive KL loss $\ell_{\methodmath}^{(l)}\leftarrow D_\mathrm{KL}\Big(\langle\rho\rangle\,\big\|\,\hat{\rho}\Big)$.
    \ENDIF
  \end{algorithmic}
\end{algorithm}

\FloatBarrier
\section{Implementation Details}
\label{app:train_detail}

\subsection{Model and Hyperparameters}

All three runs share a common off-policy schedule: per model we fix a rollout batch size $\mathcal{B}_{\text{global}}$ and let the per-update batch size $\mathcal{B}_{\text{update}}$ take three values to realize off-2, off-4, and off-8. Unless noted otherwise, the asymmetric clip ratios are $\epsilon_{\text{low}}=0.2$ and $\epsilon_{\text{high}}=0.28$, the SGLang oversampling ratio is $0.1$, and predictors are zero-initialized. Following the trade-off between the training update mini-steps and learning rate described by~\cite{ma2025stabilizing}, we use a smaller per-update learning rate under stronger off-policy reuse. Each rollout batch supports $\kappa$ inner mini-step updates under off-$\kappa$, so a smaller per-update learning rate keeps the cumulative parameter drift over a rollout comparable across off-policy strengths.

\paragraph{Qwen3-30B-A3B-Base.}
Rollout uses $\mathcal{B}_{\text{global}}=64$ with 8 responses per prompt and a maximum generation length of $16\mathrm{K}$ tokens. We sweep $\mathcal{B}_{\text{update}}\in\{32,16,8\}$ for off-2, off-4, and off-8 with learning rates $\{2,\,1.5,\,1\}\times10^{-6}$ and predictor learning-rate multipliers $\{10^{4},\,10^{3},\,10^{3}\}$. Each run uses 32 NVIDIA H200 GPUs for about 72 hours.

\paragraph{Moonlight-16B-A3B.}
Rollout uses $\mathcal{B}_{\text{global}}=32$ with 16 responses per prompt and a maximum generation length of $1\mathrm{K}$ tokens. We sweep $\mathcal{B}_{\text{update}}\in\{16,8,4\}$ for off-2, off-4, and off-8 with learning rates $\{5,\,3.7,\,2.5\}\times10^{-7}$ and predictor learning-rate multipliers $\{10^{2},\,5\times10^{1},\,5\times10^{1}\}$. Each run uses 32 NVIDIA H200 GPUs for about 24 hours.

\paragraph{OLMoE-1B-7B.}
Rollout uses $\mathcal{B}_{\text{global}}=32$ with 8 responses per prompt and a maximum generation length of $1\mathrm{K}$ tokens. We sweep $\mathcal{B}_{\text{update}}\in\{16,8,4\}$ for off-2, off-4, and off-8 with learning rates $\{5,\,3.7,\,2.5\}\times10^{-7}$ and a fixed predictor learning-rate multiplier $10^{3}$. The clip ratios are $\epsilon_{\text{low}}=0.2$ and $\epsilon_{\text{high}}=0.1$. Each run uses 8 NVIDIA RTX PRO 6000 GPUs for about 4 hours.

\paragraph{Baseline Settings.}
GRPO follows DAPO Clip-Higher without KL regularization. GSPO uses $\epsilon_{\text{low}}=3\times10^{-4}$, $\epsilon_{\text{high}}=4\times10^{-4}$, and KL coefficient $10^{-3}$. Routing replay~records routes from the old policy snapshot. Rollout routing replay~shares \method's data, optimizer, off-policy schedule, and validations but replays routes produced by the rollout-engine behavior policy.

\paragraph{\method~Implementation.}
\method~adds an evolution predictor to each router without modifying the PPO objective. Runtime overhead is limited to a single router-side projection during route recording and the predictive-loss evaluation on a bounded feature cache during training. The full predicted top-$k$ indices are always cached for replay. Route-recording features $(h_{\text{old}},p_{\text{old}})$ used to train the predictor are cached only for a fixed number of tokens, with cache length $T_c=2\mathrm{K}$ for Qwen3-30B-A3B-Base (whose maximum generation length is $T=16\mathrm{K}$) and $T_c=T=1\mathrm{K}$ for Moonlight-16B-A3B and OLMoE-1B-7B. For Qwen3-30B-A3B-Base, the $T_c$ retained positions are drawn by uniform sub-sampling along the rollout, so the predictor sees uniform coverage of the full $16\mathrm{K}$ context rather than only the fixed positions of tokens.

\subsection{Resource Details}
\label{app:overhead_hparams}

\begin{table}[!htbp]
\centering
\scriptsize
\setlength{\tabcolsep}{6pt}
\renewcommand{\arraystretch}{1.08}
\resizebox{0.98\linewidth}{!}{%
\begin{tabular}{l c c c c c}
\toprule
\rowcolor{TableHeader}
\textbf{Model}
& \textbf{$(L,d,N,k,s,m,T,T_c)$}
& \textbf{Index Cache}
& \textbf{Feature Cache}
& \textbf{FLOPs}
& \textbf{FFN Ratio} \\
\midrule
Qwen3
& $(48,2048,128,8,0,768,16\mathrm{K},2\mathrm{K})$
& 25.2 MB
& 453 MB
& 25.2M
& 0.69\% \\
Moonlight
& $(26,2048,64,6,2,1408,1\mathrm{K},1\mathrm{K})$
& 0.64 MB
& 116 MB
& 6.82M
& 0.19\% \\
OLMoE
& $(16,2048,64,8,0,1024,1\mathrm{K},1\mathrm{K})$
& 0.52 MB
& 71.3 MB
& 4.19M
& 0.26\% \\
\bottomrule
\end{tabular}%
}
\vspace{8pt}
\caption{\method~cache and route-recording forward-compute overhead. Index cache uses $4TLk$ bytes for int32 expert indices, feature cache uses $T_cL(2d+4N)$ bytes for BF16 hidden features and FP32 router logits, and forward compute reports the extra $2LdN$ FLOPs per token from the predictor projection. Cache values are per response, and FLOPs values are per token.}
\label{tab:model_config}
\end{table}

\paragraph{Predictor Parameters and Compute.}
Each evolution predictor is a linear map $W_p^{(l)}\in\mathbb{R}^{d\times N}$. The total predictor size is therefore $LdN$ parameters. During route recording, \method~adds one predictor projection per MoE layer, or $2LdN$ extra FLOPs per generated token. Table~\ref{tab:model_config} reports the resulting model-specific values. Under the gated-MLP expert form with gate, up, and down projections, the Feed-Forward Network (FFN) ratio compares this route-recording forward overhead with the active expert computation, approximated as $6L(k+s)dm$ FLOPs per token. For Moonlight-16B-A3B, $s=2$ counts shared experts, while the cached routed indices still use $k=6$.

\paragraph{Cache Cost.}
We store replay expert indices as int32, so the index cache adds $4Lk$ bytes per generated token across all MoE layers and scales with the rollout length $T$. For predictor learning, \method~additionally caches route-recording features $(h_{\text{old}},p_{\text{old}})$ for a fixed feature-cache length $T_c$, with BF16 hidden features and FP32 router logits, contributing $T_cL(2d+4N)$ bytes per response. We use $T_c=2\mathrm{K}$ for Qwen3-30B-A3B-Base, whose maximum generation length is $16\mathrm{K}$. For Moonlight-16B-A3B and OLMoE-1B-7B the entire generation is already $1\mathrm{K}$ tokens, so no downsampling is applied and $T_c=T=1\mathrm{K}$. This separation keeps full expert indices cheap while bounding the feature cache under long contexts.

\section{Additional Results with \methodthree}
\label{app:additional_main_results}

\paragraph{\methodthree~Settings.}
Table~\ref{tab:qwen_r3_comparison} compares \methodthree~with rollout routing replay on Qwen3-30B-A3B-Base. Both methods use the same data and hyperparameters. Unlike the main \method~experiments, this comparison uses learning-rate multipliers of $\{10^{2},\,10^{1},\,10^{1}\}$ for off-2, off-4, and off-8 runs, respectively.

\begin{table}[!htbp]
\centering
\scriptsize
\setlength{\tabcolsep}{6pt}
\renewcommand{\arraystretch}{1.08}
\resizebox{0.8\linewidth}{!}{%
\begin{tabular}{llccccc}
\toprule
\rowcolor{TableHeader}
\textbf{Policy} & \textbf{Method}
& \begin{tabular}[c]{@{}c@{}}\textbf{AIME24}\\\textnormal{\scriptsize(Avg@32)}\end{tabular}
& \begin{tabular}[c]{@{}c@{}}\textbf{AIME25}\\\textnormal{\scriptsize(Avg@32)}\end{tabular}
& \begin{tabular}[c]{@{}c@{}}\textbf{AMC23}\\\textnormal{\scriptsize(Avg@16)}\end{tabular}
& \begin{tabular}[c]{@{}c@{}}\textbf{HMMT25}\\\textnormal{\scriptsize(Avg@16)}\end{tabular}
& \textbf{Average} \\
\midrule
& GRPO + \rthree~ & 42.36 & 28.05 & 84.38 & 13.75 & 42.14 \\
\rowcolor{TableHighlight}
\cellcolor{white}\multirow{-2}{*}{Off-2}
& \textbf{GRPO + \methodthree} & \textbf{44.48} & \textbf{29.69} & \textbf{87.97} & \textbf{16.67} & \textbf{44.70} \\
\midrule
& GRPO + \rthree~ & 41.22 & 28.92 & \textbf{85.21} & 15.56 & 42.73 \\
\rowcolor{TableHighlight}
\cellcolor{white}\multirow{-2}{*}{Off-4}
& \textbf{GRPO + \methodthree} & \textbf{47.19} & \textbf{32.81} & 83.28 & \textbf{20.83} & \textbf{46.03} \\
\midrule
& GRPO + \rthree~ & 36.18 & 24.20 & 79.22 & \textbf{11.39} & 37.75 \\
\rowcolor{TableHighlight}
\cellcolor{white}\multirow{-2}{*}{Off-8}
& \textbf{GRPO + \methodthree} & \textbf{40.42} & \textbf{28.33} & \textbf{81.25} & 10.00 & \textbf{40.00} \\
\bottomrule
\end{tabular}%
}
\vspace{8pt}
\caption{\textbf{Additional \methodthree~results on Qwen3-30B-A3B-Base.} \rthree~denotes rollout routing replay for simplicity. Bold values mark the better accuracy in each metric column within an off-policy strength, and shaded rows mark \methodthree. Results are averaged over 3 different seeds.}
\label{tab:qwen_r3_comparison}
\end{table}

\paragraph{\methodthree~Results.}
Across all three off-policy settings, \methodthree~improves the average competition-math accuracy over \rthree~by $2.56\%$, $3.30\%$, and $2.25\%$ points under off-2, off-4, and off-8, respectively. The gains are consistent on AIME24 and AIME25 in every setting, and on AMC23 and HMMT25 in most settings. The only exceptions are AMC23 under off-4 and HMMT25 under off-8, where \rthree~is higher by $1.93\%$ and $1.39\%$ points. The largest average gain appears under off-4, suggesting that the learned predictive bias in \methodthree~is especially useful when replayed rollouts become stale.

\section{Additional Experimental Analysis}
\label{app:additional_exp_analysis}

We provide additional ablations and diagnostics to examine whether the predictive component of \method~is robust to design choices and whether its route prediction remains reliable across update horizons. All ablations are conducted on Qwen3-30B-A3B-Base.

\paragraph{Predictive Objective Ablation.}
We compare the default \method~predictive loss $\mathcal{L}_{\methodmath}$ with a delta-matching alternative $\mathcal{L}_{\methodmath}^{\Delta}$. The delta-matching variant directly supervises the router-logit residual:
\begin{align*}
\mathcal{L}_{\methodmath}^{\Delta}
= \sum_{l=1}^{L}\mathbb{E}_t\!\left[
D_\mathrm{KL}\!\left(
\mathrm{Softmax}(\langle \Delta p_t^{(l)} \rangle)
\,\big\|\,
\mathrm{Softmax}(b_t^{(l)})
\right)
\right],
\end{align*}
where $\Delta p_t^{(l)} := p_t^{(l)} - p_{\text{old},t}^{(l)}$ and $b_t^{(l)} := h_{\text{old},t}^{(l)} W_p^{(l)}$. As shown in Figure~\ref{fig:loss_type}, both objectives produce similar mean-advantage trajectories and comparable AIME24 improvement. The standard KL objective achieves slightly better final accuracy, so we use it in the main experiments.

\paragraph{Learning-Rate Multiplier Ablation.}
We further sweep the predictor learning-rate multiplier $\alpha$ under off-2. Figure~\ref{fig:abl_alpha} compares $\alpha=1\times10^4$ and $\alpha=1\times10^6$. Both settings yield stable optimization and steadily improve AIME24 accuracy, indicating that \method~does not rely on a narrowly tuned predictor learning rate. The larger multiplier learns faster in early training and reaches comparable final accuracy, while $\alpha=10^4$ gives a slightly smoother mean-advantage trajectory. We therefore use the default multiplier setting in the main experiments.

\begin{figure}[!htbp]
    \centering
    \begin{subfigure}[t]{0.49\linewidth}
        \centering
        \includegraphics[width=\linewidth]{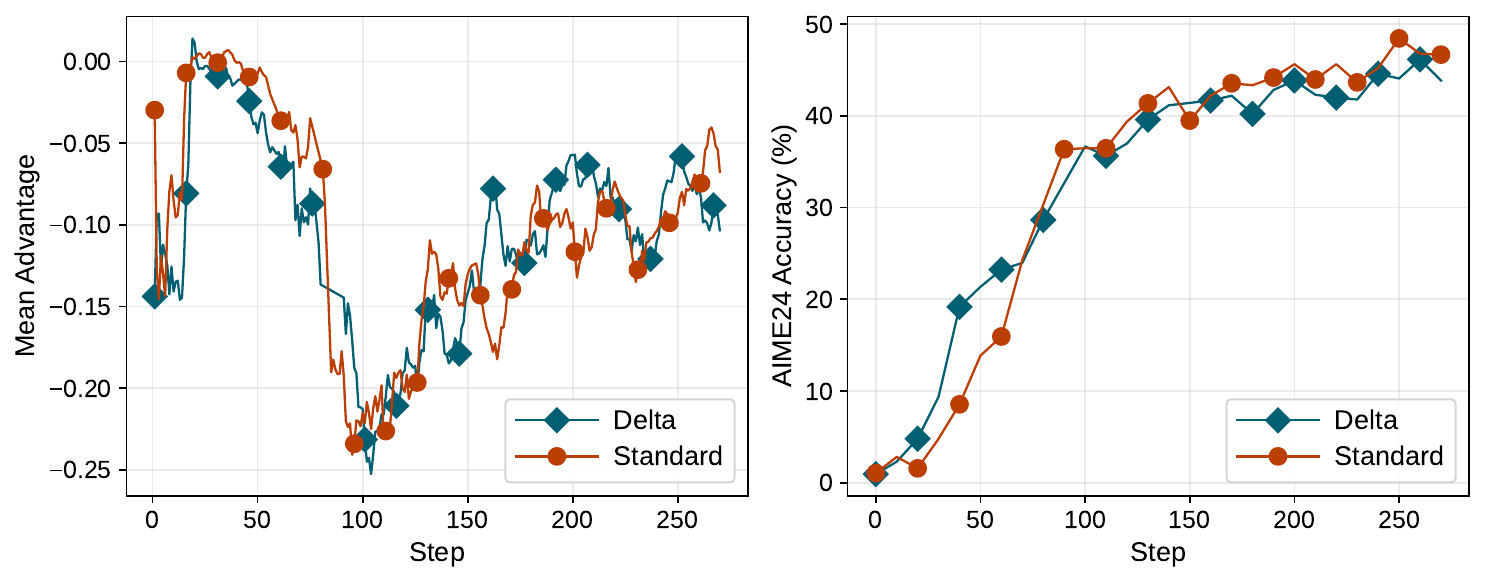}
        \caption{Predictive loss form.}
        \label{fig:loss_type}
    \end{subfigure}
    \hfill
    \begin{subfigure}[t]{0.49\linewidth}
        \centering
        \includegraphics[width=\linewidth]{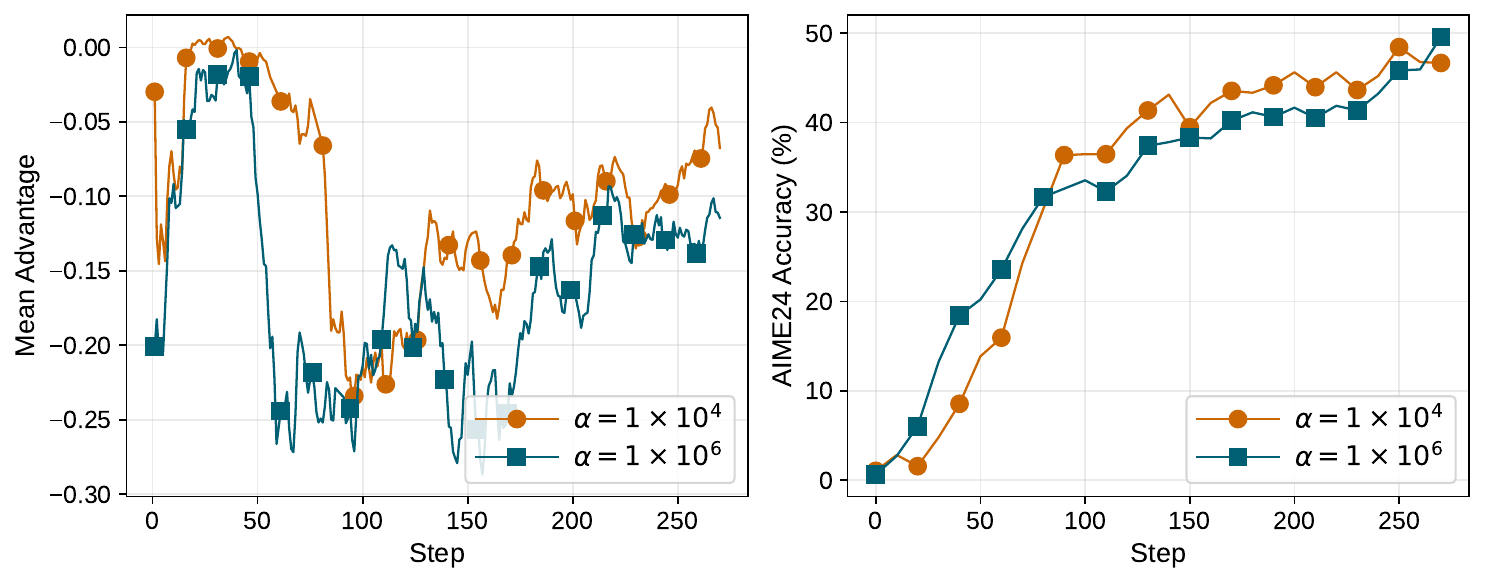}
        \caption{Predictor learning-rate multiplier.}
        \label{fig:abl_alpha}
    \end{subfigure}
    \caption{\textbf{Additional predictive-routing ablations on Qwen3-30B-A3B-Base under off-2.} The panels compare predictive loss forms and predictor learning-rate multipliers, reporting the batch-averaged clipped surrogate integrand and AIME24 accuracy over training steps.}
    \label{fig:app_ablation_summary}
\end{figure}

\paragraph{Layer-Wise Top-$k$ Accuracy.}
\method~is designed to predict short-horizon router evolution rather than perform long-range route forecasting. We therefore measure top-$k$ agreement for each MoE layer and each within-batch mini-step. Figure~\ref{fig:ana_horizon} shows that agreement remains consistently high across off-2, off-4, and off-8, with most layers staying above $98\%$. The curves for different mini-steps largely overlap, suggesting that the prediction remains reliable over multiple reuse updates. The dominant variation comes from layer depth, where later layers show slightly lower agreement, but the values remain high enough for the predictor to provide useful replay supports.

\begin{figure}[!htbp]
\centering
\includegraphics[width=1.0\linewidth]{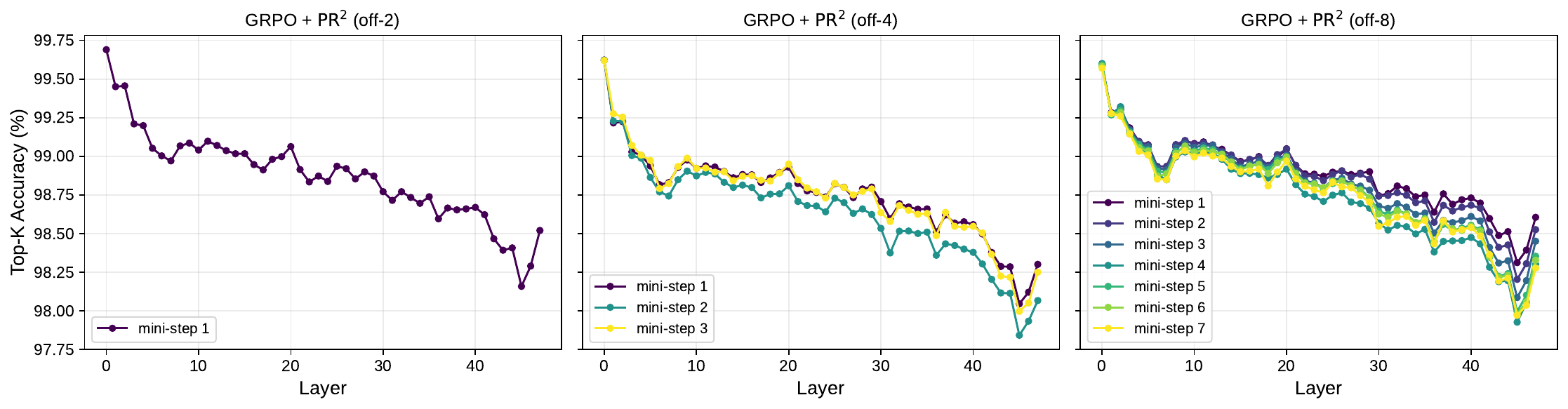}
\vspace{-4pt}
\caption{\textbf{Layer-wise top-$k$ accuracy across mini-steps on Qwen3-30B-A3B-Base.} The horizontal axis indexes MoE layers, and each curve reports the top-$k$ agreement at a different mini-step.}
\label{fig:ana_horizon}
\end{figure}

Overall, Figures~\ref{fig:app_ablation_summary} and~\ref{fig:ana_horizon} show that \method~is robust to the predictive objective form and predictor learning-rate multiplier, while maintaining high route-prediction accuracy across layers and repeated rollout reuse.

\end{document}